\PassOptionsToPackage{hyphens}{url}
\documentclass[sigconf]{acmart}
\pdfoutput=1  
\AtBeginDocument{}
\copyrightyear{2026}
\acmYear{2026}
\setcopyright{cc}
\setcctype{by}
\acmConference[KDD '26]{Proceedings of the 32nd ACM SIGKDD Conference on Knowledge Discovery and Data Mining V.2}{August 09--13, 2026}{Jeju Island, Republic of Korea}
\acmBooktitle{Proceedings of the 32nd ACM SIGKDD Conference on Knowledge Discovery and Data Mining V.2 (KDD '26), August 09--13, 2026, Jeju Island, Republic of Korea}
\acmDOI{10.1145/3770855.3818835}
\acmISBN{979-8-4007-2259-2/2026/08}

\usepackage{graphicx}
\usepackage{natbib}
\usepackage{caption}
\usepackage{algorithm}
\usepackage{algpseudocode}  
\usepackage{multirow}

\usepackage{amssymb}
\usepackage{subcaption}
\usepackage{booktabs}
\usepackage{amsmath}
\usepackage{amsthm}
\usepackage{newfloat}
\usepackage{listings}

\usepackage{enumitem}
\DeclareCaptionStyle{ruled}{labelfont=normalfont,labelsep=colon,strut=off}
\floatstyle{ruled}
\newfloat{listing}{tb}{lst}{}
\floatname{listing}{Listing}

\newtheorem{proposition}{Proposition}

\newtheorem{assumption}{Assumption}

\title{Lightweight Wrappers for Adapting Time Series Foundation Models to Regional Drought Forecasting}

\author{Wentao Gao} \authornote{Corresponding author. Email: wentao.gao@adelaide.edu.au}
\affiliation{%
  \institution{Adelaide University}
  \city{Adelaide}
  \country{Australia}
}

\author{Jiuyong Li}
\affiliation{%
  \institution{Adelaide University}
  \city{Adelaide}
  \country{Australia}
}

\author{Lin Liu}
\affiliation{%
  \institution{Adelaide University}
  \city{Adelaide}
  \country{Australia}
}

\author{Thuc Duy Le}
\affiliation{%
  \institution{Adelaide University}
  \city{Adelaide}
  \country{Australia}
}

\author{Jixue Liu}
\affiliation{%
  \institution{Adelaide University}
  \city{Adelaide}
  \country{Australia}
}

\author{Yanchang Zhao}
\affiliation{%
  \institution{CSIRO Technology}
  \city{Adelaide}
  \country{Australia}
}

\author{Yun Chen}
\affiliation{%
  \institution{CSIRO Environment}
  \city{Adelaide}
  \country{Australia}
}

\renewcommand{\shortauthors}{Wentao Gao et al.}
\usepackage{bibentry}

\begin{document}

\begin{abstract}
Large \emph{Time Series Foundation Models} (TSFMs) demonstrate strong zero-shot forecasting capabilities across diverse domains. However, their application to regional climate forecasting faces practical challenges: model weights are often proprietary, local training records are limited, and computational budgets are constrained, making traditional fine-tuning approaches infeasible. To address these constraints, we introduce a lightweight, black-box adaptation framework (requiring no access to backbone parameters and no backbone fine-tuning) that enhances frozen TSFMs at inference time through two plug-and-play wrappers: \textbf{SMR\textsuperscript{2}} (Stationarity aware multi-resolution Residual), which decomposes the input into multi-resolution temporal views, learns stride specific residual corrections that capture regional dynamics, then adaptively ensembles them into a single forecast, and \textbf{MBB} (Moving Block Bootstrap), which preserves temporal dependencies through block resampling and ensembles over temporally coherent residual perturbations to stabilize the point forecast. Both wrappers instantiate the same bagging style principle: they build diverse views of the input or its residuals, forecast each with the same frozen backbone, and aggregate, so all adaptation comes from inference time ensembling rather than any weight update. Evaluated on one month ahead Standardized Precipitation Evapotranspiration Index (SPEI) prediction across multiple sites in South Australia, our framework consistently improves forecasting performance across several backbone models, demonstrating up to 26\% mean squared error (MSE) reduction over the corresponding frozen backbone while enabling practical deployment in resource constrained regional forecasting systems.
\end{abstract}

\begin{CCSXML}
<ccs2012>
 <concept>
  <concept_id>10010147.10010257</concept_id>
  <concept_desc>Computing methodologies~Machine learning</concept_desc>
  <concept_significance>500</concept_significance>
 </concept>
 <concept>
  <concept_id>10010405.10010444</concept_id>
  <concept_desc>Applied computing~Earth and atmospheric sciences</concept_desc>
  <concept_significance>300</concept_significance>
 </concept>
 <concept>
  <concept_id>10002950.10003648.10003688</concept_id>
  <concept_desc>Mathematics of computing~Time series analysis</concept_desc>
  <concept_significance>300</concept_significance>
 </concept>
</ccs2012>
\end{CCSXML}

\ccsdesc[500]{Computing methodologies~Machine learning}
\ccsdesc[300]{Applied computing~Earth and atmospheric sciences}
\ccsdesc[300]{Mathematics of computing~Time series analysis}

\keywords{time series foundation models, inference time adaptation, drought forecasting, SPEI, multi-resolution analysis, block bootstrap, black-box model adaptation}

\maketitle

\section{Introduction}

Drought is among the most disruptive climate hazards, especially in semi arid regions such as South Australia, where it undermines agriculture, water security, and ecosystem resilience \cite{Mishra2010}. The Standardized Precipitation Evapotranspiration Index (SPEI) integrates precipitation with potential evapotranspiration and is widely used to characterise drought severity across time scales \cite{Vicente2010,Begueria2014}.

Existing approaches face challenges in predicting SPEI one to three months in advance, especially in local regions where data is insufficient to train complex models. Traditional statistical models including ARIMA, SARIMA, and state space formulations \cite{Box2015TimeSeriesAnalysis} assume quasi stationarity and therefore struggle to capture the coexistence of short lived pulses and long term trends. Traditional machine learning algorithms such as support vector machines or random forests can model nonlinear relationships yet depend on hand crafted features and suffer in high dimensional, multivariate settings \cite{belgiu2016random}. Deep architectures LSTMs \cite{Hochreiter1997}, TCNs \cite{Bai2018}, and the growing family of Transformer variants \cite{vaswani2017attention,wu2023timesnet_iclr} can learn temporal dependencies end to end, but they require large datasets and are prone to overfitting in data sparse regions.

\begin{figure}[h]
    \centering
    \includegraphics[width=1\linewidth]{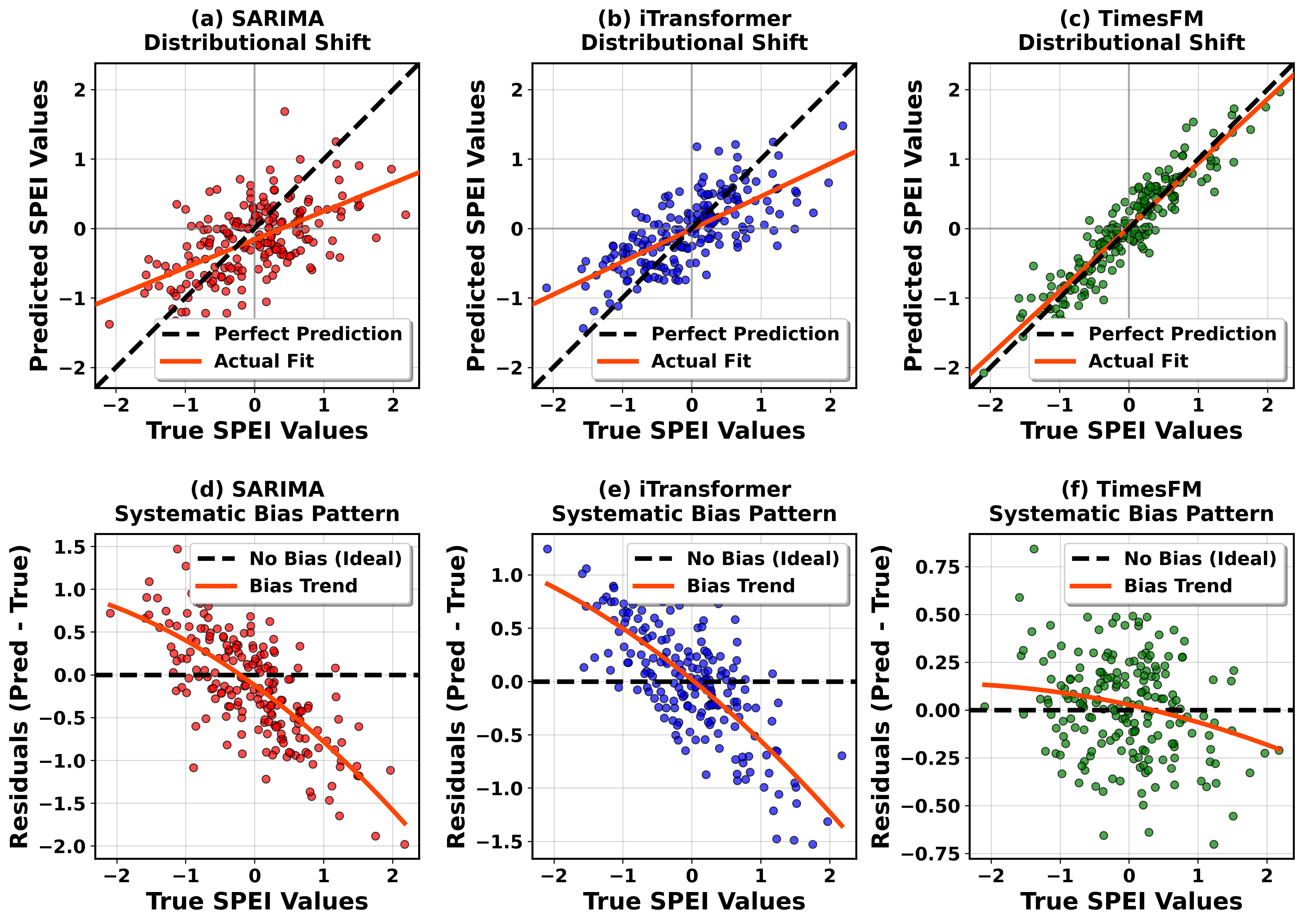}
    \caption{Regional forecasting limitations across three model families. (a to c) Predicted vs.\ true SPEI for SARIMA, iTransformer, and TimesFM. The three behave differently because they encode different inductive biases: SARIMA assumes quasi stationarity, iTransformer is task-trained on the available regional data in our experiment, and TimesFM is applied zero-shot from large scale pre training. (d to f) Corresponding residuals. All three exhibit \emph{systematic} (non random) regional bias rather than white noise; the bias is pronounced for SARIMA and iTransformer (d, e) and small but still structured for TimesFM (f). These systematic regional biases motivate our inference-time correction.}
    \label{fig:enter_label}
\end{figure}

Large Time Series Foundation Models (TSFMs) have emerged as a promising alternative.  
Pre trained on millions of heterogeneous sequences, these models demonstrate strong zero-shot skill across finance, traffic, and meteorology \cite{Bommasani2021}; recent examples include TimesFM \cite{Sen2024} and Lag-Llama \cite{rasul2023lagllama}.  
While physics aware encoder decoder designs such as GraphCast \cite{Lam2022}, FourCastNet \cite{Pathak2022}, and Pangu-Weather \cite{Bi2023} have advanced global numerical weather prediction, they require complex post processing pipelines to generate drought indices, making general purpose TSFMs more attractive for direct climate index forecasting.  

Yet applying TSFMs to regional drought forecasting faces significant practical constraints that traditional adaptation approaches cannot address. Foundation model weights are often proprietary and inaccessible, preventing direct fine-tuning or architectural modifications. Regional climate records are typically short spanning decades rather than the centuries needed for robust model retraining while operational forecasting centers operate under strict computational budgets that preclude resource intensive adaptation methods. These constraints necessitate lightweight, inference time solutions that can enhance foundation model performance without requiring parameter access or extensive retraining.

Figure 1 illustrates these regional adaptation difficulties across three representative model families: a classical statistical model (SARIMA), a deep learning model (iTransformer), and a foundation model (TimesFM). The three differ because they rely on different inductive biases, yet all leave \emph{systematic} rather than random residuals. The bias is most pronounced for SARIMA and iTransformer and smallest for the pre trained TimesFM, but even TimesFM retains a structured residual pattern. This indicates that regional temporal dynamics contain characteristics at different temporal resolutions that large scale pre training does not fully capture, motivating specialized inference time adaptation.

To address these practical constraints while preserving foundation models' general capabilities, we introduce two lightweight inference time wrappers. The Stationarity aware multi-resolution Residual wrapper (SMR\textsuperscript{2}) decomposes the regional forecasting task into multi-resolution temporal pattern learning, capturing stride specific regional dynamics that complement the global knowledge embedded in foundation models. MBB (Moving Block Bootstrap) constructs residual block perturbations of the target history and averages the corresponding frozen backbone forecasts.

In summary, our main contributions are as follows:
\begin{itemize}
    \item We propose two plug-and-play inference time wrappers, SMR\textsuperscript{2} for multi-resolution regional pattern learning and MBB for variance reducing block bootstrap ensembling, that require no access to the backbone foundation model weights.
    \item We present a theoretical analysis built on a single residual decomposition assumption: a convex risk bound with an explicit bias variance trade off that motivates the adaptive multi-resolution weighting, together with the standard block bootstrap variance identity that explains the ensemble's stabilizing effect.
    \item We conduct comprehensive experiments on SPEI-30 forecasting across three South Australian sites, showing up to 26\% mean squared error (MSE) reduction over the corresponding frozen backbone, consistently outperforming 14 statistical and deep learning baselines across three foundation model backbones.
\end{itemize}

\section{Related Work}

\textbf{Drought Forecasting Approaches.} Traditional statistical methods like ARIMA assume stationarity and struggle with nonlinear climate dynamics \cite{Mishra2010}, while machine learning approaches (SVMs, Random Forests) require handcrafted features and lack temporal modeling capabilities \cite{belgiu2016random}. Deep learning architectures including LSTMs \cite{Hochreiter1997}, TCNs \cite{Bai2018}, and Transformers \cite{vaswani2017attention} enable end to end learning but require task specific retraining and suffer from limited regional data availability.

\textbf{Time Series Foundation Models.} Large scale pre trained models like TimesFM \cite{Sen2024} and Lag-Llama \cite{rasul2023lagllama} demonstrate strong zero-shot capabilities across domains. While domain specific weather models (GraphCast \cite{Lam2022}, FourCastNet \cite{Pathak2022}) excel at global numerical prediction, they generate gridded physical variables requiring extensive post processing for climate indices, introducing uncertainty sources and computational constraints that exceed regional system capabilities. This motivates our focus on general purpose TSFMs that can forecast climate indices directly, with regional adaptation handled outside the backbone.

\textbf{Multi-resolution and Resampling Methods.} multi-resolution analysis decomposes signals across temporal scales \cite{Percival2000Wavelet}, with recent advances in multi scale attention \cite{guo2025multivariate} and residual frameworks like N-BEATS \cite{Oreshkin2020NBeats}. The Moving Block Bootstrap \cite{kunsch1989jackknife} preserves temporal dependencies when resampling serially correlated data \cite{kokic2013blockbootstrap}. These methods are therefore less suited to black-box backbone adaptation.

Our work builds on these ideas by applying multi-resolution residual correction and temporally aware resampling at inference time around a frozen backbone.

\section{Preliminary}
\label{sec:preliminary}

\subsection{Problem Formulation}

Let $\{(\mathbf{c}_t, y_t)\}$ denote a multivariate climate time series, where each $\mathbf{c}_t \in \mathbb{R}^{d}$ contains $d=38$ predictor features (34 meteorological variables and 4 climate indices) and $y_t \in \mathbb{R}$ is the scalar target SPEI value. For a time interval $a{:}b$, $y_{a:b}=(y_a,\ldots,y_b)$ denotes the corresponding target history. We define:
\begin{itemize}
    \item $L$: Length of the context window used for prediction
    \item $L_{\text{train}}$: Reduced context length used during calibration phase ($L_{\text{train}} < L$)
    \item $\mathcal{T}_{\text{train}}, \mathcal{T}_{\text{val}}$: Index sets used for calibration and validation; held out test splits are used only for reporting
\end{itemize}

At each forecast time $t$, we collect the context window $\mathbf{X}_t = (\mathbf{c}_{t-L+1:t}, y_{t-L+1:t})$ and predict the next SPEI value with a frozen Time Series Foundation Model backbone $f_{\theta}^{\text{base}}$, i.e., a fixed mapping $\hat{y}_{t+1} = f_{\theta}^{\text{base}}(\mathbf{X}_t)$ from a context window to a forecast. We write $\mathbf{X}_t^{(s)}$ for the same window after the stride $s$ multi-resolution view construction is applied to every channel of $\mathbf{X}_t$ (the $d$ covariates and the target history), i.e.\ $\mathbf{X}_t^{(s)} = (\mathbf{c}_{t-L+1:t}^{(s)}, y_{t-L+1:t}^{(s)})$, pairing the stride $s$ covariate view $\mathbf{c}_{t-L+1:t}^{(s)}$ with the stride $s$ target history view $y_{t-L+1:t}^{(s)}$, both constructed in the Method section below. We focus on one month ahead prediction.

\begin{figure*}[h]
    \centering
    \includegraphics[width=0.85\linewidth]{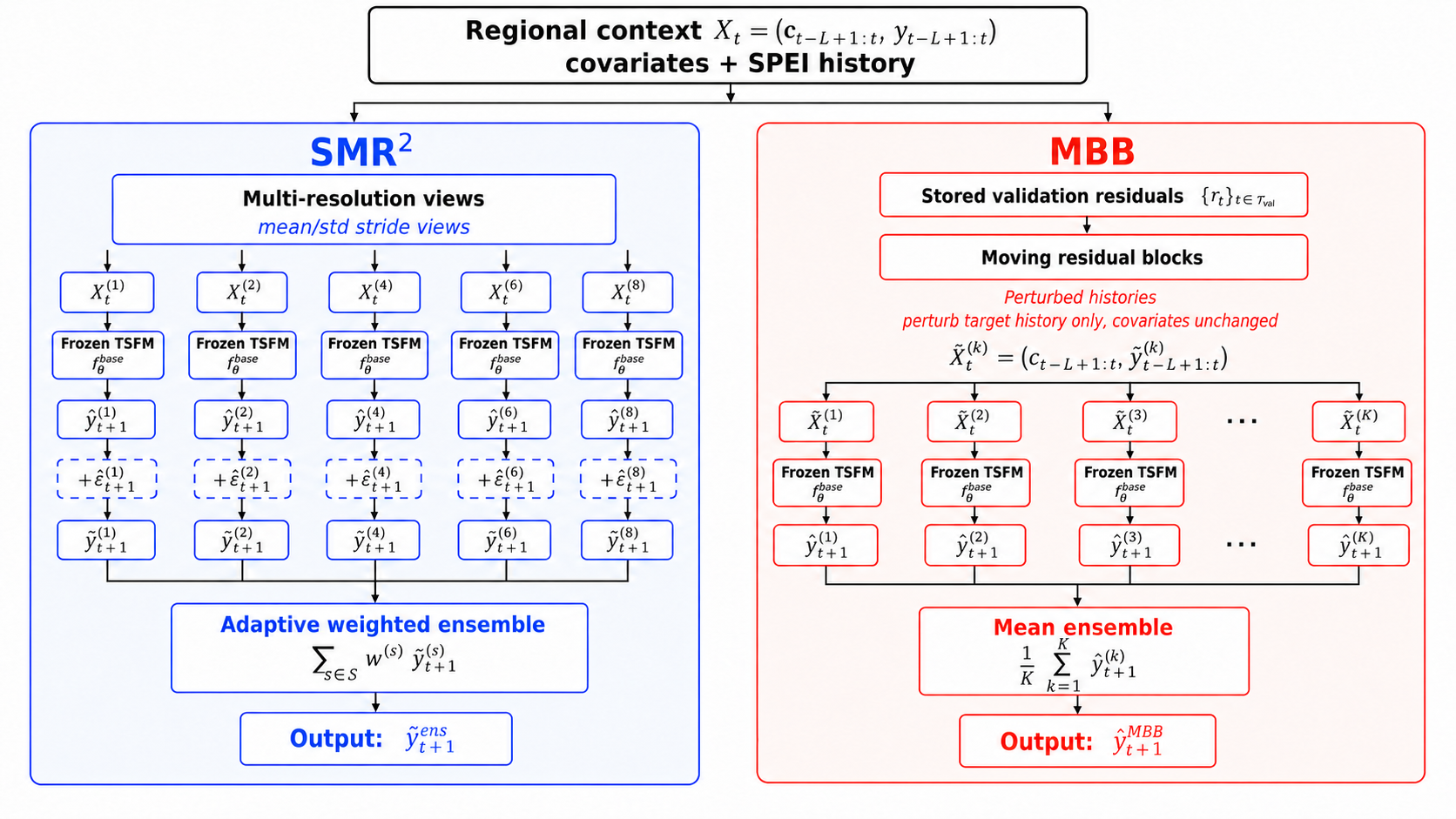}
    \caption{Overview of the proposed inference time wrappers for frozen TSFMs. SMR\textsuperscript{2} forms multi-resolution residual ensemble members from mean/std stride views and adaptively aggregates their corrected forecasts, while MBB forms block bootstrap ensemble members from temporally coherent residual perturbations and averages their forecasts. Both wrappers follow a bagging style principle: diverse inference time members are passed through the same frozen backbone and then aggregated.}
    \label{fig:method_overview}
\end{figure*}

\section{Method}
\label{sec:method}

\subsection{Framework Overview}

To adapt frozen Time Series Foundation Models (TSFMs) for regional forecasting tasks, we propose two lightweight inference time wrappers that operate without modifying pre trained model parameters. Here, \(f_{\theta}^{\text{base}}\) denotes the frozen TSFM backbone (e.g., TimesFM, TimeGPT) that provides the base predictions; we write $f_{\theta}^{\text{base}}$ throughout the Method to emphasize that it is not updated. The SMR\textsuperscript{2} wrapper addresses regional pattern learning, while the MBB wrapper improves forecast robustness through block bootstrap ensembling. Figure~\ref{fig:method_overview} provides an overview of the two wrappers, highlighting their shared bagging style structure: diverse inference time series, a shared frozen backbone, and aggregation.

\subsection{\texorpdfstring{SMR\textsuperscript{2}}{SMR2} Wrapper}

Foundation models trained on global datasets face adaptation difficulties when applied to regional forecasting tasks, owing to practical deployment constraints and the need to capture local temporal dynamics. Figure~\ref{fig:smr2_motivation} demonstrates that regional time series exhibit non stationary shifts over time, different temporal strides reveal distinct distributional characteristics, and foundation model residuals show complementary error patterns across resolutions.

\begin{figure}[t]
    \centering
    \includegraphics[width=1\linewidth]{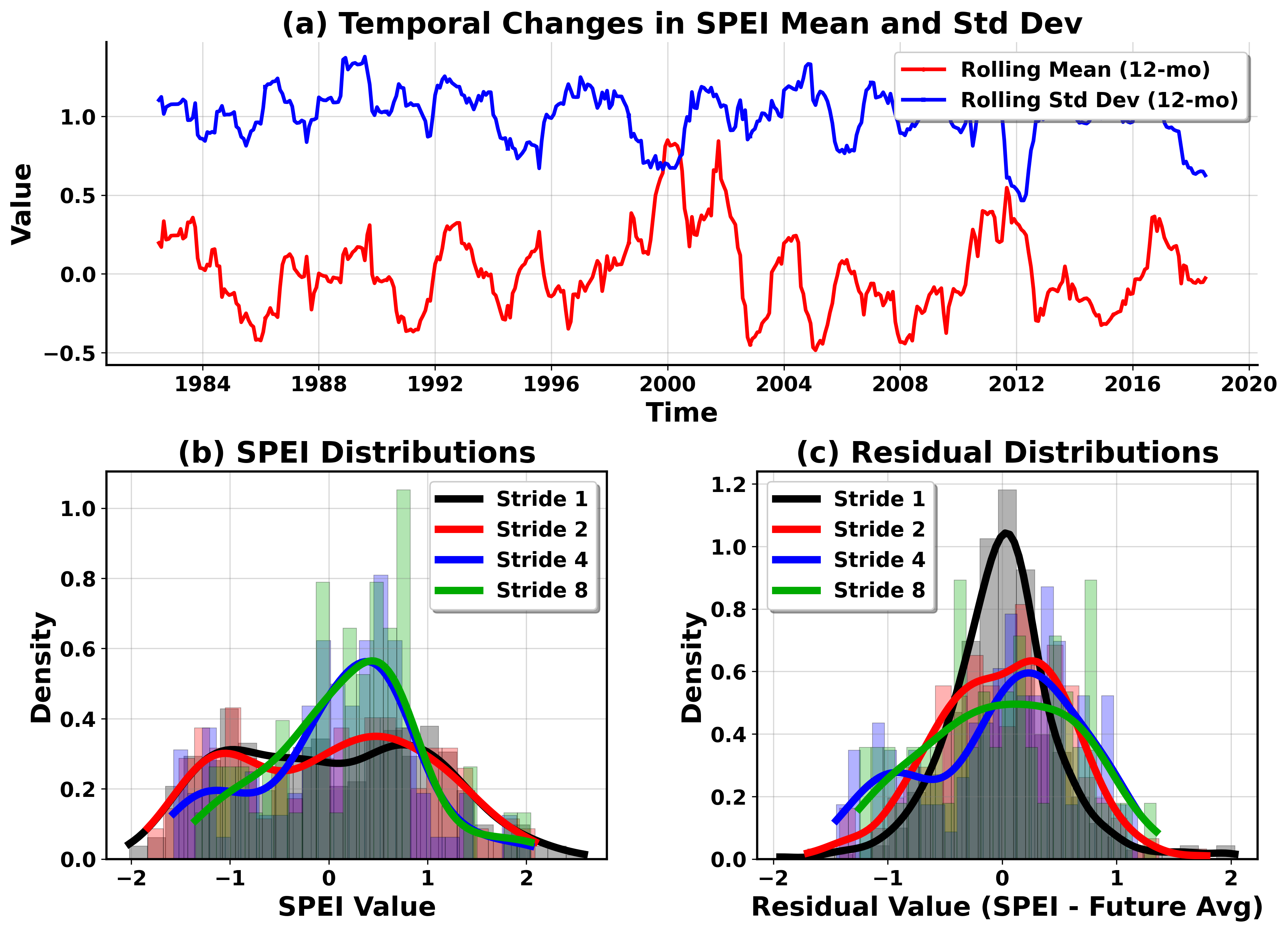}
    \caption{Motivation for multi-resolution regional pattern learning. (a) Rolling 12 month mean (red) and standard deviation (blue) of SPEI over 1982--2018, showing pronounced temporal non stationarity. (b, c) Histograms (bars) with overlaid kernel density estimates (solid curves) of the SPEI values and of the foundation model residuals ($y-\hat{y}$, i.e., observed SPEI minus the base model forecast), respectively. In both panels color encodes the temporal stride used for downsampling: stride~1 (black), 2 (red), 4 (blue), and 8 (green). Different strides yield visibly different distributions, with the residual densities changing shape across resolutions, indicating complementary information across resolutions and motivating multi-resolution adaptation.}
    \label{fig:smr2_motivation}
\end{figure}

Instead of retraining the entire model, we can enhance regional forecasting by learning complementary temporal patterns across multiple resolutions that supplement the foundation model's global knowledge. Different temporal resolutions capture different aspects of the underlying signal: fine grained patterns reveal short term fluctuations while coarse grained patterns expose long term trends. We decompose the regional enhancement problem into multi-resolution pattern learning, where each temporal stride contributes differently to the overall performance, and residual learning, where we learn a residual correction for each stride. We call the wrapper \emph{stationarity aware} in this specific sense: the per segment mean and standard deviation used to build each stride view (Eqs.~\eqref{eq:seg_mean}--\eqref{eq:seg_std}) act as a local normalization that absorbs slow drifts in level and variance \citep{liu2022nonstationary}, rather than because it performs an explicit stationarity test, change point detection, or regime classification.

The SMR\textsuperscript{2} approach addresses both aspects simultaneously. Let $\mathcal{S}$ denote the set of temporal strides and let $w^{(s)}$ be the ensemble weight assigned to stride $s$:
\begin{equation}
\label{eq:smr2_forecast}
\begin{aligned}
\tilde{y}_{t+1}^{\mathrm{ens}} &= \sum_{s \in \mathcal{S}} w^{(s)}\, \tilde{y}_{t+1}^{(s)}, \\
\tilde{y}_{t+1}^{(s)} &= f_{\theta}^{\text{base}}(\mathbf{X}_t^{(s)}) + \hat{\epsilon}_{t+1}^{(s)}.
\end{aligned}
\end{equation}
where $\mathbf{X}_t^{(s)}$ is the stride $s$ multi-resolution input view (of both the covariates and the target history), $\tilde{y}_{t+1}^{(s)}$ is the corrected stride $s$ forecast, $\tilde{y}_{t+1}^{\mathrm{ens}}$ is the final SMR\textsuperscript{2} ensemble forecast, and $\hat{\epsilon}_{t+1}^{(s)}$ is the learned residual correction defined by the two phase procedure below.

\paragraph{multi-resolution Input Construction}
We build the stride $s$ input $\mathbf{X}_t^{(s)}$ by applying multi-resolution downsampling to the full context window $\mathbf{X}_t = (\mathbf{c}_{t-L+1:t}, y_{t-L+1:t})$, that is, to all $d{+}1$ channels at once. Let $\mathbf{x}_\tau = (\mathbf{c}_\tau, y_\tau) \in \mathbb{R}^{d+1}$ denote the input at a single time step $\tau$, stacking the $d$ covariates with the scalar target. For each stride $s$, we partition the first $s\lfloor L/s \rfloor$ positions of the window into $\lfloor L/s \rfloor$ complete non overlapping segments. Any trailing remainder of length $L-s\lfloor L/s \rfloor$ is not used to form an additional partial segment when computing the downsampled statistics. Writing $I_j^{(s)} = \{(j-1)s+1, \ldots, js\}$ for the index set of segment $j$, we compute, for each $j \in \{1, 2, \ldots, \lfloor L/s \rfloor\}$, the per segment mean and standard deviation of the input:
\begin{align}
\bar{\mathbf{x}}_j^{(s)} &= \frac{1}{|I_j^{(s)}|} \sum_{k \in I_j^{(s)}} \mathbf{x}_{t-L+k} \label{eq:seg_mean} \\
\boldsymbol{\sigma}_j^{(s)} &= \sqrt{\frac{1}{|I_j^{(s)}|} \sum_{k \in I_j^{(s)}} \left(\mathbf{x}_{t-L+k} - \bar{\mathbf{x}}_j^{(s)}\right)^2} \label{eq:seg_std}
\end{align}
Both operations act elementwise across the $d{+}1$ channels, so the same stride $s$ summary is applied to every covariate and to the target history alike.

Following downsampling, we perform temporal upsampling to obtain a length $L$ representation. In the implementation, each segment's statistics are repeated $\lfloor L/\lfloor L/s\rfloor\rfloor$ times. If the repeated sequence is shorter than $L$, the remaining positions are filled with the last complete segment statistic; if it is longer than $L$, it is truncated. Thus, for the upsampled feature vector $\mathbf{u}_{t-L+k}^{(s)}$,
\begin{equation}
\begin{aligned}
\mathbf{u}_{t-L+k}^{(s)}
&=
[\bar{\mathbf{x}}_{j}^{(s)},\boldsymbol{\sigma}_{j}^{(s)}],
\\
j=
&
\min\left(
\left\lceil
\frac{k}{\lfloor L/\lfloor L/s\rfloor\rfloor}
\right\rceil,
\left\lfloor \frac{L}{s}\right\rfloor
\right),
\\
&\quad k=1,\ldots,L .
\end{aligned}
\label{eq:upsample}
\end{equation}
Concatenating mean with standard deviation provides richer geometric information than traditional delay embeddings, and stacking the upsampled vectors $\mathbf{u}_{t-L+k}^{(s)}$ over $k$ yields the multi-resolution input view $\mathbf{X}_t^{(s)} \in \mathbb{R}^{L \times 2(d+1)}$ that is passed to the backbone. We write $\mathbf{c}_{t-L+1:t}^{(s)} \in \mathbb{R}^{L \times 2d}$ and $y_{t-L+1:t}^{(s)} \in \mathbb{R}^{L \times 2}$ for its covariate and target history sub blocks, so $\mathbf{X}_t^{(s)} = (\mathbf{c}_{t-L+1:t}^{(s)}, y_{t-L+1:t}^{(s)})$.

\paragraph{Two Phase Prediction Framework}
The two-phase framework separates residual learning from final forecasting.
In the calibration phase, the frozen backbone is applied to each stride view
to obtain a base forecast, and the difference between the observed target and
this base forecast is stored as a stride-specific residual. In the inference
phase, the same frozen backbone first produces the stride-specific base forecast
\(\hat{y}^{(s)}_{t+1}\); a lightweight residual corrector then predicts
\(\hat{\epsilon}^{(s)}_{t+1}\) from recent residuals and current stride-view
summaries; the corrected stride forecast is finally obtained as
\(\tilde{y}^{(s)}_{t+1}=\hat{y}^{(s)}_{t+1}+\hat{\epsilon}^{(s)}_{t+1}\).
We use a reduced context length \(L_{\mathrm{train}}\) during calibration to
compute initial residuals, while using the full context length \(L\) for final
ensemble predictions.

During the calibration phase, we use the reduced context length $L_{\text{train}}$ to compute calibration residuals for all $t \in \mathcal{T}_{\text{train}}$. To avoid confusing this reduced window with the full inference window, let $\mathbf{X}_{t-1,L_{\text{train}}}=(\mathbf{c}_{t-L_{\text{train}}:t-1}, y_{t-L_{\text{train}}:t-1})$ denote the length-$L_{\text{train}}$ context ending at $t-1$, and let $\mathbf{X}_{t-1,L_{\text{train}}}^{(s)}$ be its stride $s$ view:
\begin{equation}
\epsilon_t^{(s)} = y_t - f_{\theta}^{\text{base}}(\mathbf{X}_{t-1,L_{\text{train}}}^{(s)})
\end{equation}
Here $\epsilon_t^{(s)}$ is the realized base model residual at stride $s$. Collecting these residuals gives the calibration residual set $\mathcal{R}^{(s)}=\{\epsilon_t^{(s)}\}_{t\in\mathcal{T}_{\text{train}}}$. Calibration shares the forecast origin alignment of inference and uses no information from time $t$ or later (no look ahead).

During the inference phase, for each stride $s \in \mathcal{S}$, we run both models in parallel using the full context window $L$:
\begin{align}
\hat{y}_{t+1}^{(s)} &= f_{\theta}^{\text{base}}(\mathbf{X}_t^{(s)}), \\
\hat{\epsilon}_{t+1}^{(s)} &= f^{\text{res}}\big(\boldsymbol{\phi}_t^{(s)}\big),
   \qquad \boldsymbol{\phi}_t^{(s)} = \big[\,\epsilon_{t-q+1:t}^{(s)},\; \psi(\mathbf{X}_{t}^{(s)})\,\big], \\
\tilde{y}_{t+1}^{(s)} &= \hat{y}_{t+1}^{(s)} + \hat{\epsilon}_{t+1}^{(s)},
\end{align}
where the corrector input $\boldsymbol{\phi}_t^{(s)}$ concatenates the $q$ most recent stride $s$ residuals $\epsilon_{t-q+1:t}^{(s)}$ with summary statistics $\psi(\cdot)$ of the stride $s$ input view $\mathbf{X}_t^{(s)}$ (the per channel means of its mean and standard deviation components, over both the covariates and the target history). The corrector $f^{\text{res}}$ is a ridge regression with \(\ell_2\) penalty $\lambda_{\text{ridge}}=0.1$, fit \emph{once} on $\mathcal{R}^{(s)}$ and then applied unchanged at inference, so the wrapper performs no test time gradient updates. The frozen backbone consumes the joint multi-resolution view $\mathbf{X}_t^{(s)}$; for backbones whose interface accepts only a single stream (e.g., target history only or covariate only models), the unused component is simply omitted without otherwise changing the procedure.

\paragraph{Adaptive Ensemble Fusion} We choose weights that emphasize strides with low error. Our adaptive fusion strategy uses validation performance as a lightweight proxy for per stride reliability. Let $R^2(s)$ and $\mathrm{MSE}^{(s)}$ denote the validation coefficient of determination and mean squared error for stride $s$, and let $\varepsilon>0$ be a small stabilizer. The choice of $R^2$-based weighting reflects that each stride's contribution should be proportional to its explained variance, naturally emphasizing strides that better capture the underlying signal structure. When systematic biases dominate across all strides, every $R^2(s)$ becomes non positive (each stride does worse than predicting the mean), so the $R^2$-based weights are undefined or negative. We therefore fall back to inverse MSE weighting, which is justified because it remains strictly positive and normalisable, still assigns larger weight to lower error strides, and recovers a sensible ranking precisely in the regime where $R^2$ is uninformative:
\begin{equation}
\label{eq:r2weights}
w^{(s)} = \begin{cases}
\dfrac{\max(R^2(s),0)}{\sum_{s' \in \mathcal{S}} \max(R^2(s'),0)} & \text{if } \exists\, s: R^2(s) > 0 \\[8pt]
\dfrac{1/(\mathrm{MSE}^{(s)} + \varepsilon)}{\sum_{s' \in \mathcal{S}} 1/(\mathrm{MSE}^{(s')} + \varepsilon)} & \text{otherwise}
\end{cases}
\end{equation}
The $\max(\cdot,0)$ truncation guarantees that every weight is non negative and that the weights form a valid simplex, so no stride can contribute a negative coefficient.
The final ensemble output then uses these adaptive weights in Eq.~\eqref{eq:smr2_forecast}.

\subsection{MBB Wrapper}

Foundation model forecasts can be sensitive to the exact target history
supplied at inference. MBB reduces this sensitivity by constructing
\(K\) temporally coherent perturbations of the target history, feeding
each perturbed history into the same frozen backbone, and averaging
the resulting forecasts. The covariate history is kept unchanged; only
the target history is perturbed.

Figure~\ref{fig:mbb_motivation} illustrates this perturb and average
workflow: block-resampled residual sequences generate multiple perturbed
target histories, each perturbed history induces a frozen-backbone forecast,
and their average forms the final MBB forecast.

The perturbations are generated from validation residuals of the frozen
backbone. Specifically, we first collect residuals
\(\{r_t = y_t-\hat{y}_t\}_{t\in\mathcal{T}_{\mathrm{val}}}\), where
\(\hat{y}_t\) is the corresponding one-step forecast from the frozen
backbone. Since these residuals are serially correlated, sampling
individual residuals independently would break local autocorrelation.
MBB therefore samples short consecutive residual blocks rather than
individual residuals. Sampling and concatenating such blocks produces
residual perturbation sequences that preserve local temporal dependence
while generating diverse target-history variants.

\begin{figure}[t]
   \centering
   \includegraphics[width=1\linewidth]{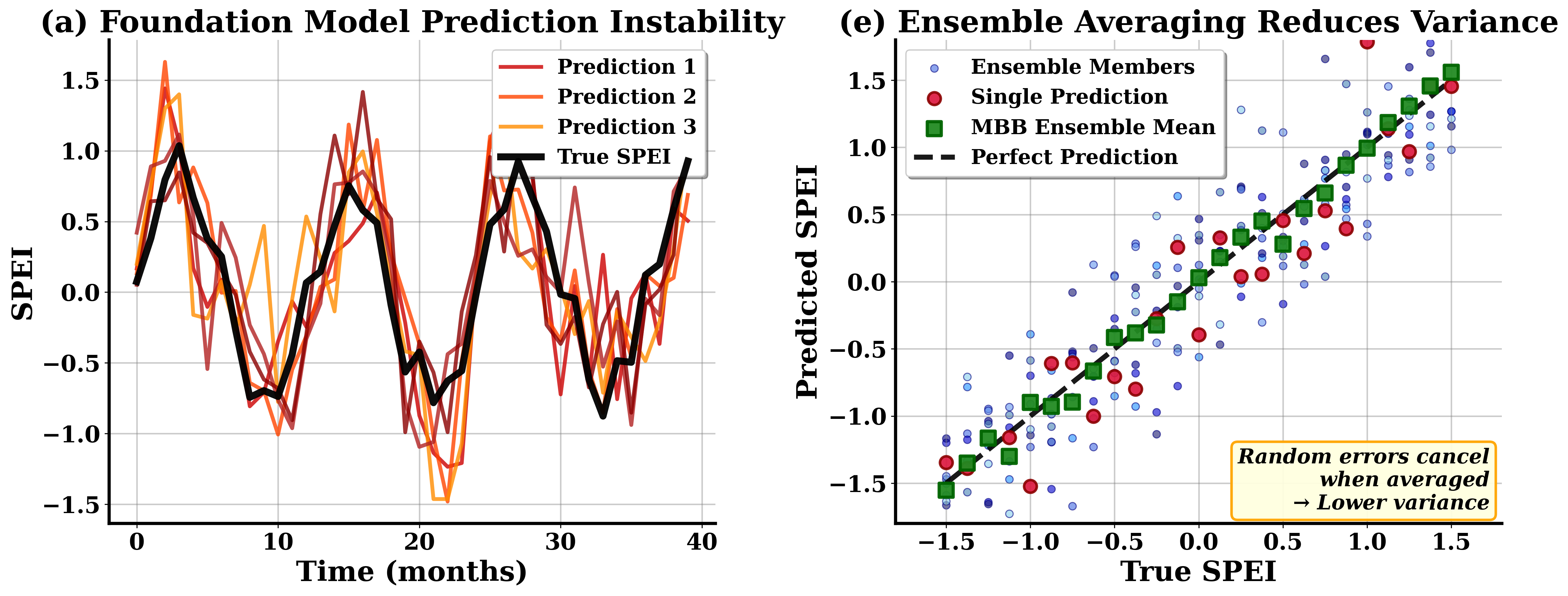}
   \caption{Block bootstrap ensembling motivation. (a) Input perturbations produce a family of frozen backbone forecasts. (b) MBB averages forecasts from block resampled residual histories.}
   \label{fig:mbb_motivation}
\end{figure}

\begin{figure*}[htbp]
  \centering
  \begin{subfigure}[b]{0.25\textwidth}
    \includegraphics[width=\textwidth]{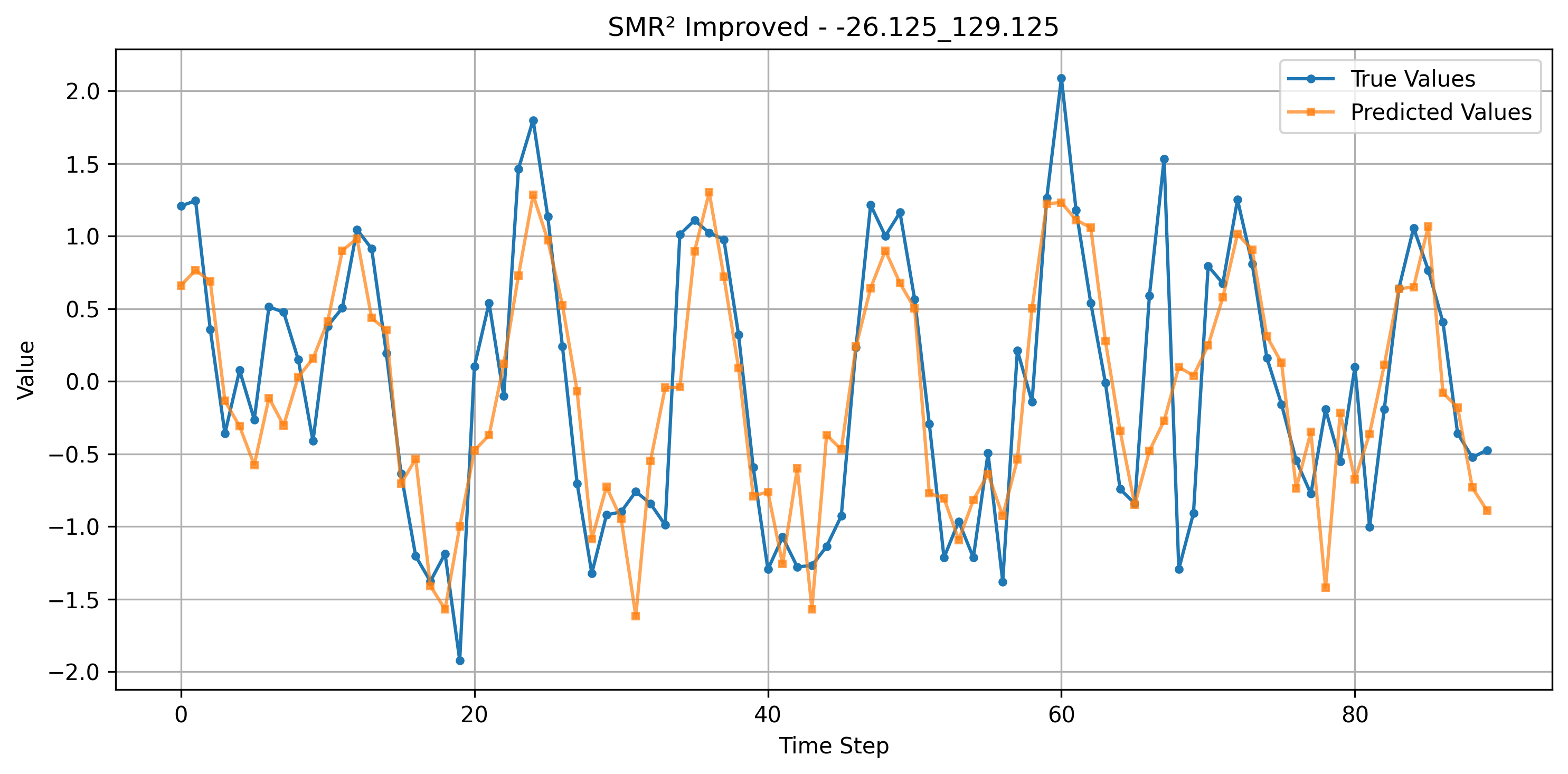}
    \caption{SMR\textsuperscript{2} (Our Method)}
    \label{fig:smr2_our}
  \end{subfigure}%
  \begin{subfigure}[b]{0.25\textwidth}
    \includegraphics[width=\textwidth]{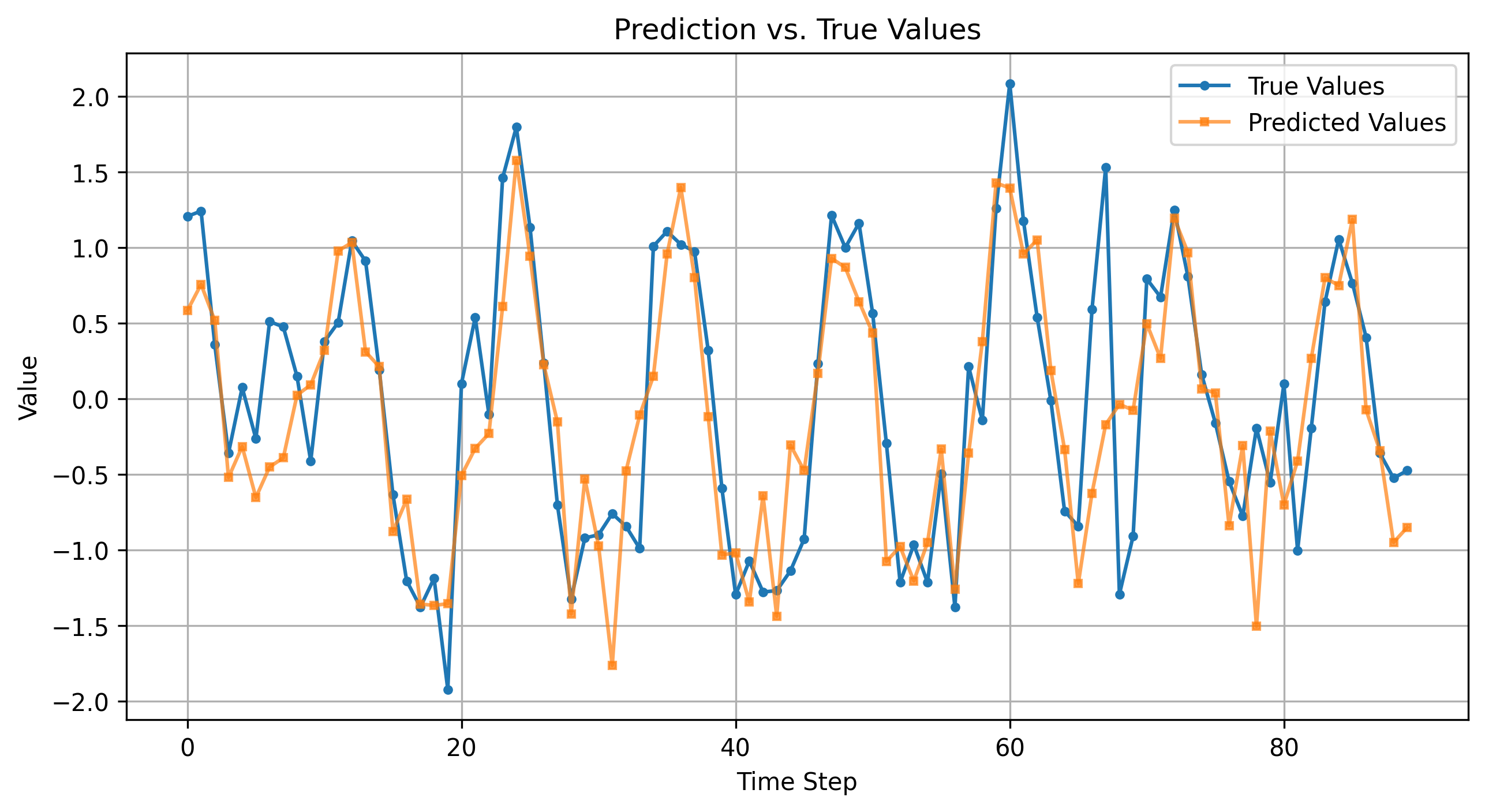}
    \caption{TimesFM}
    \label{fig:transformer}
  \end{subfigure}%
  \begin{subfigure}[b]{0.25\textwidth}
    \includegraphics[width=\textwidth]{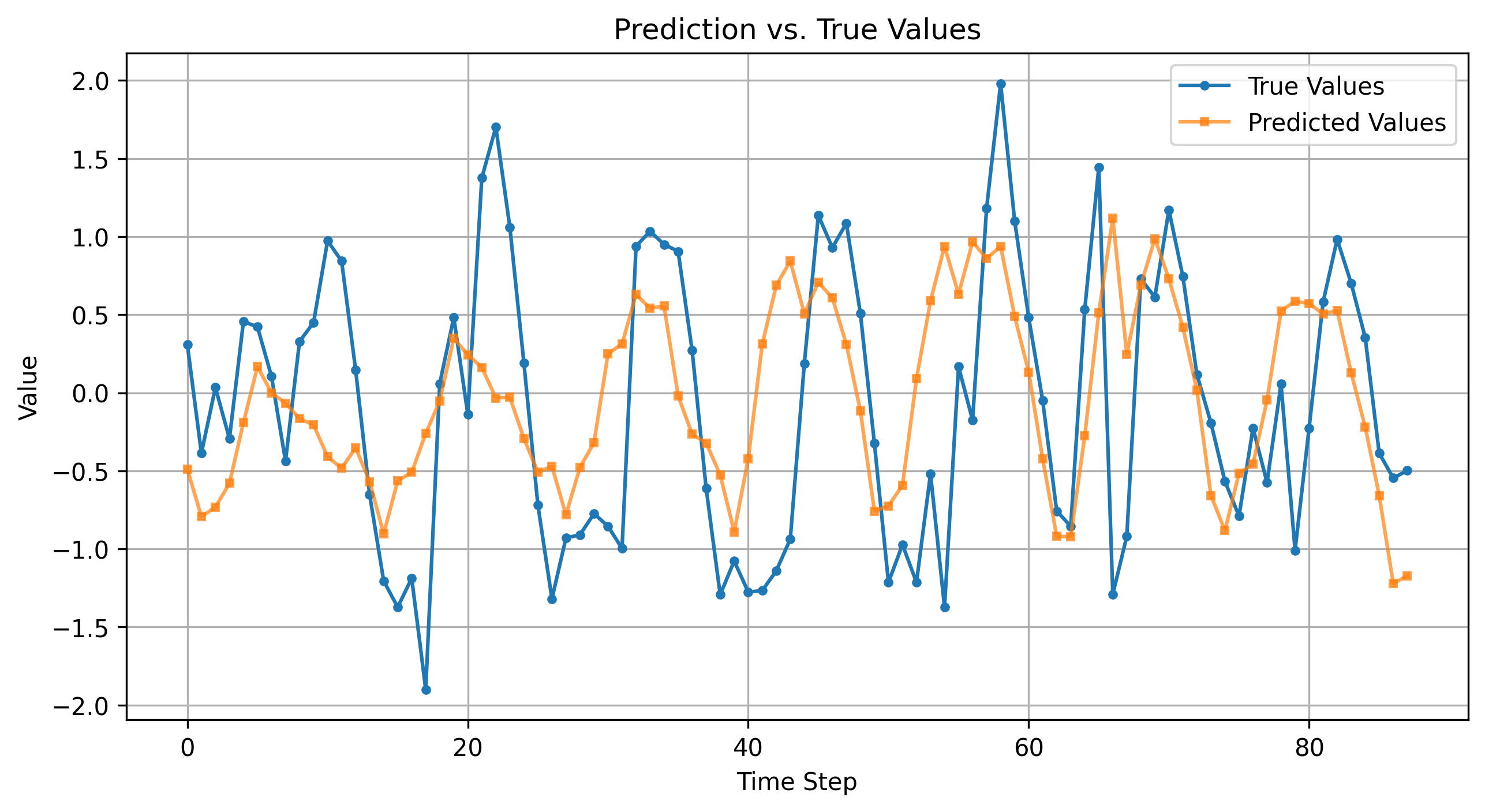}
    \caption{Autoformer}
    \label{fig:autoformer}
  \end{subfigure}%
  \begin{subfigure}[b]{0.25\textwidth}
    \includegraphics[width=\textwidth]{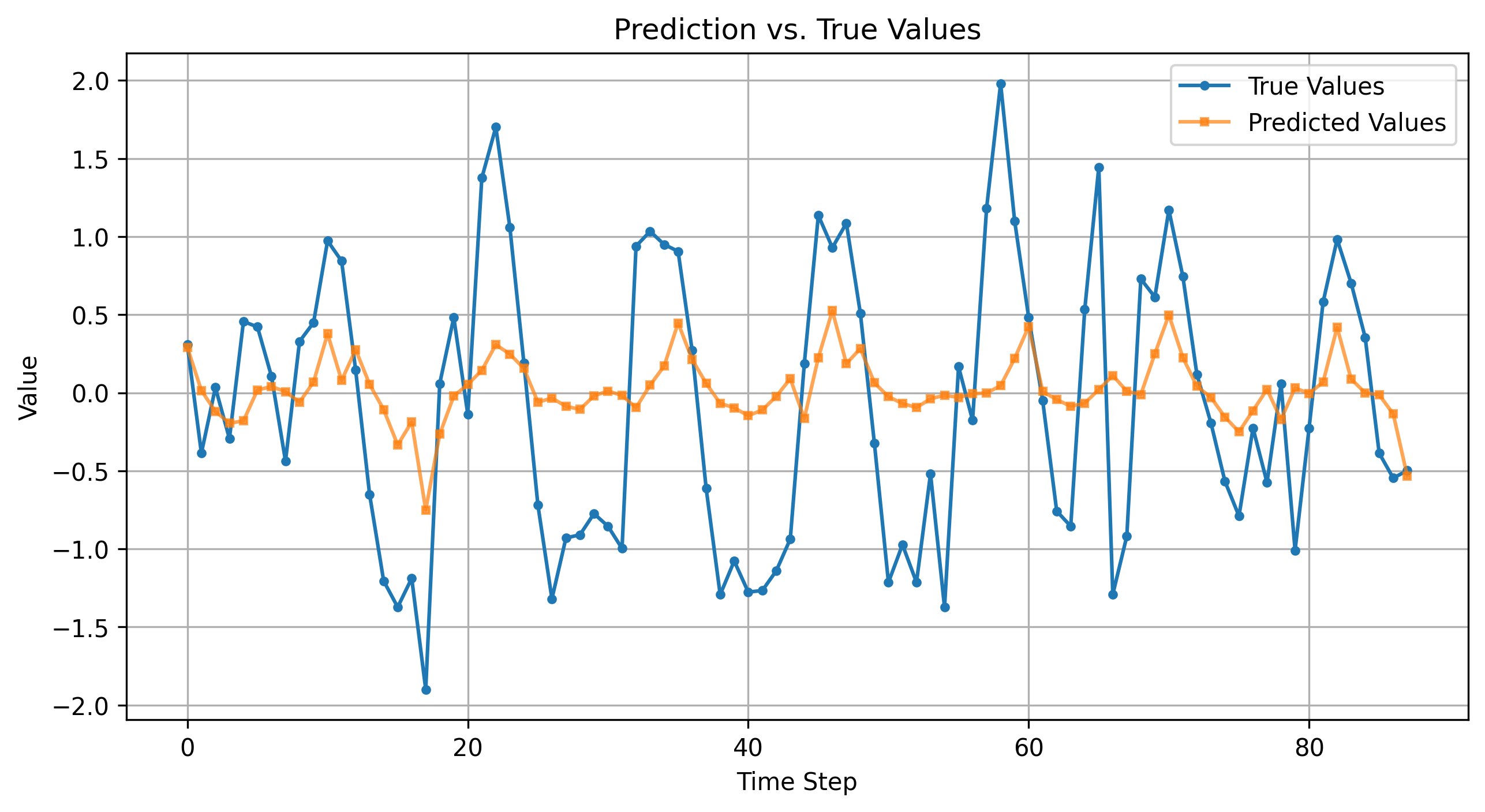}
    \caption{Crossformer}
    \label{fig:crossformer}
  \end{subfigure}%
  \\
  \begin{subfigure}[b]{0.25\textwidth}
    \includegraphics[width=\textwidth]{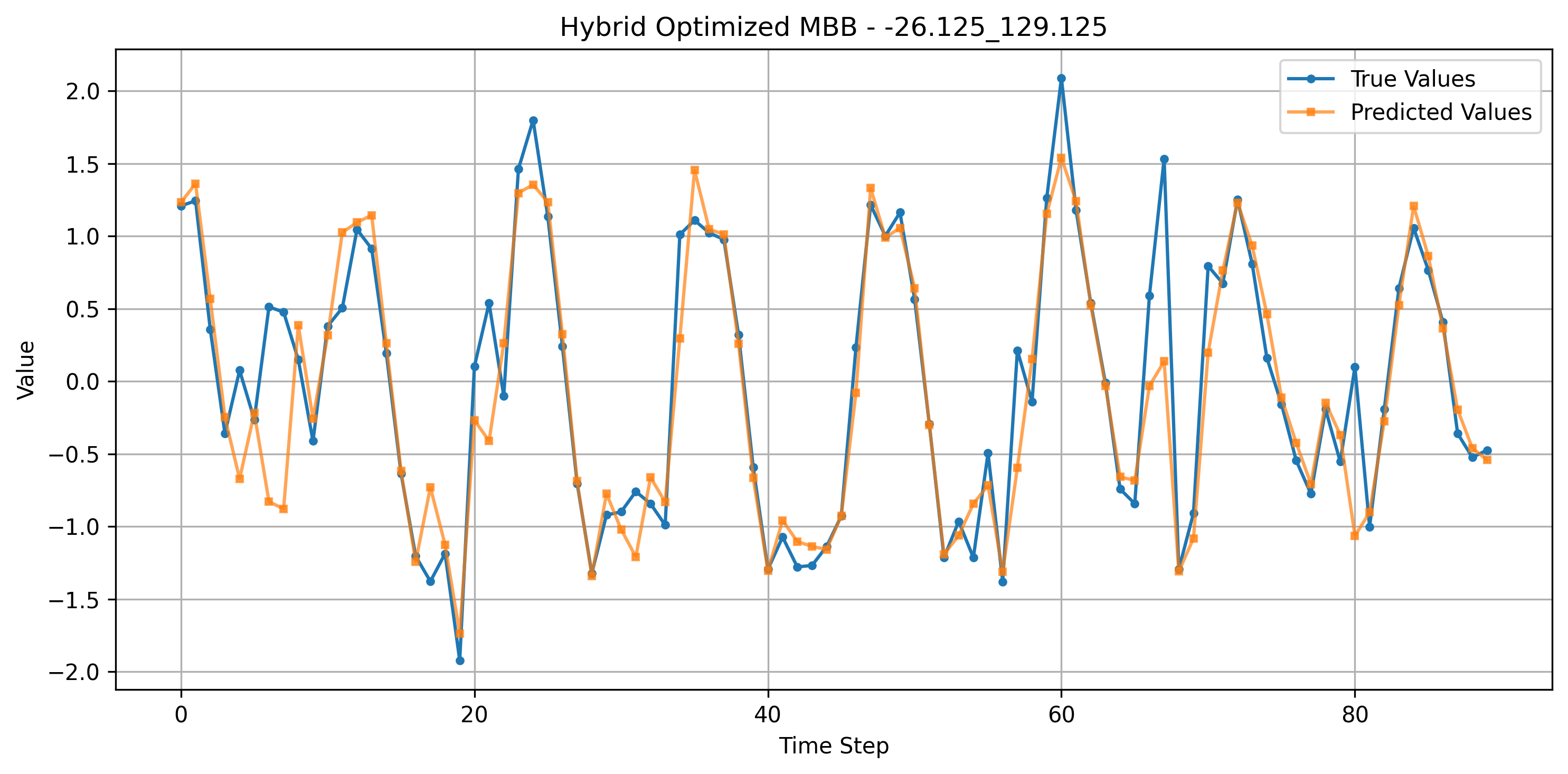}
    \caption{MBB (Our Method)}
    \label{fig:mbb_our}
  \end{subfigure}%
  \begin{subfigure}[b]{0.25\textwidth}
    \includegraphics[width=\textwidth]{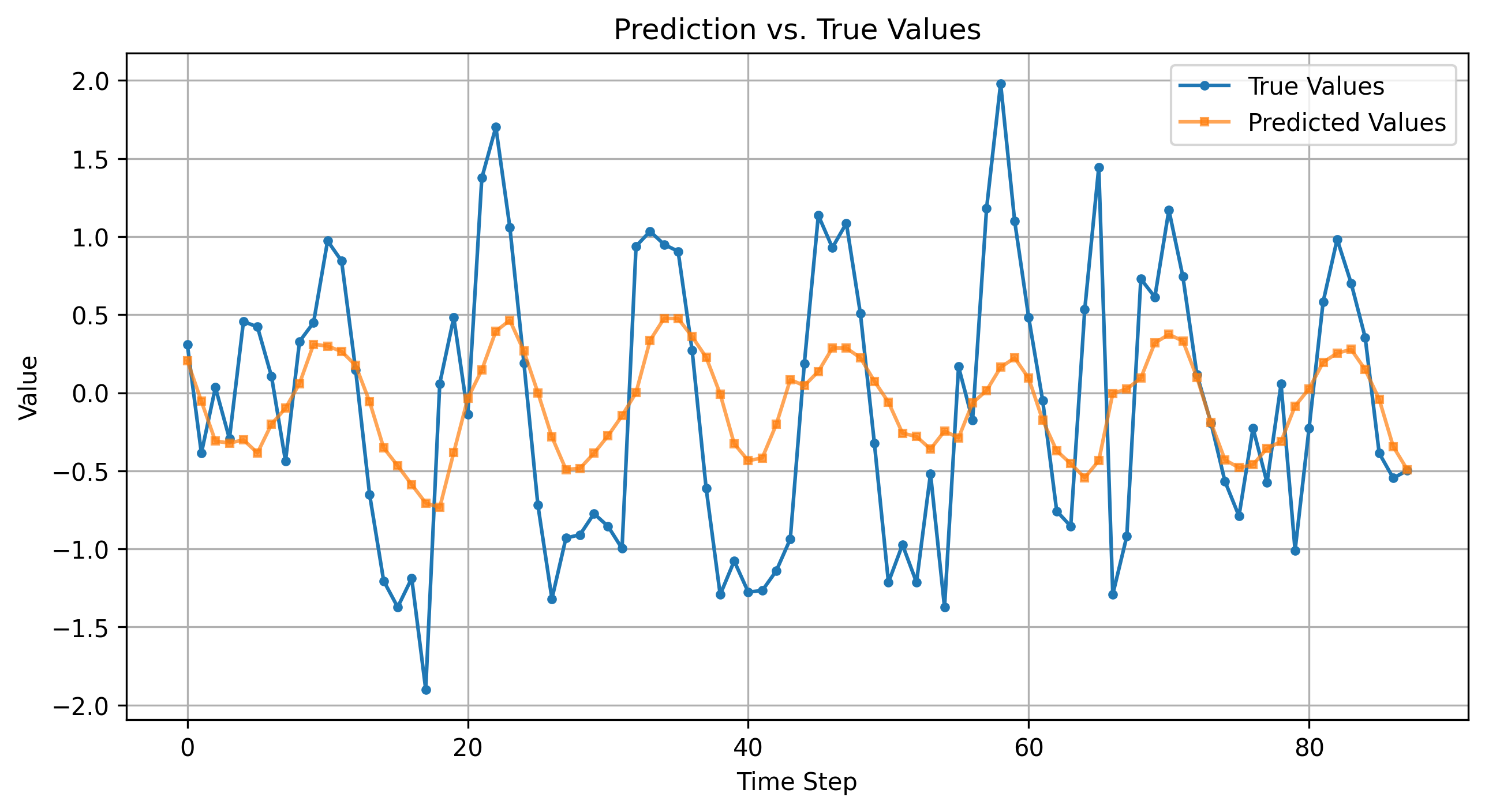}
    \caption{DLinear}
    \label{fig:dlinear}
  \end{subfigure}%
  \begin{subfigure}[b]{0.25\textwidth}
    \includegraphics[width=\textwidth]{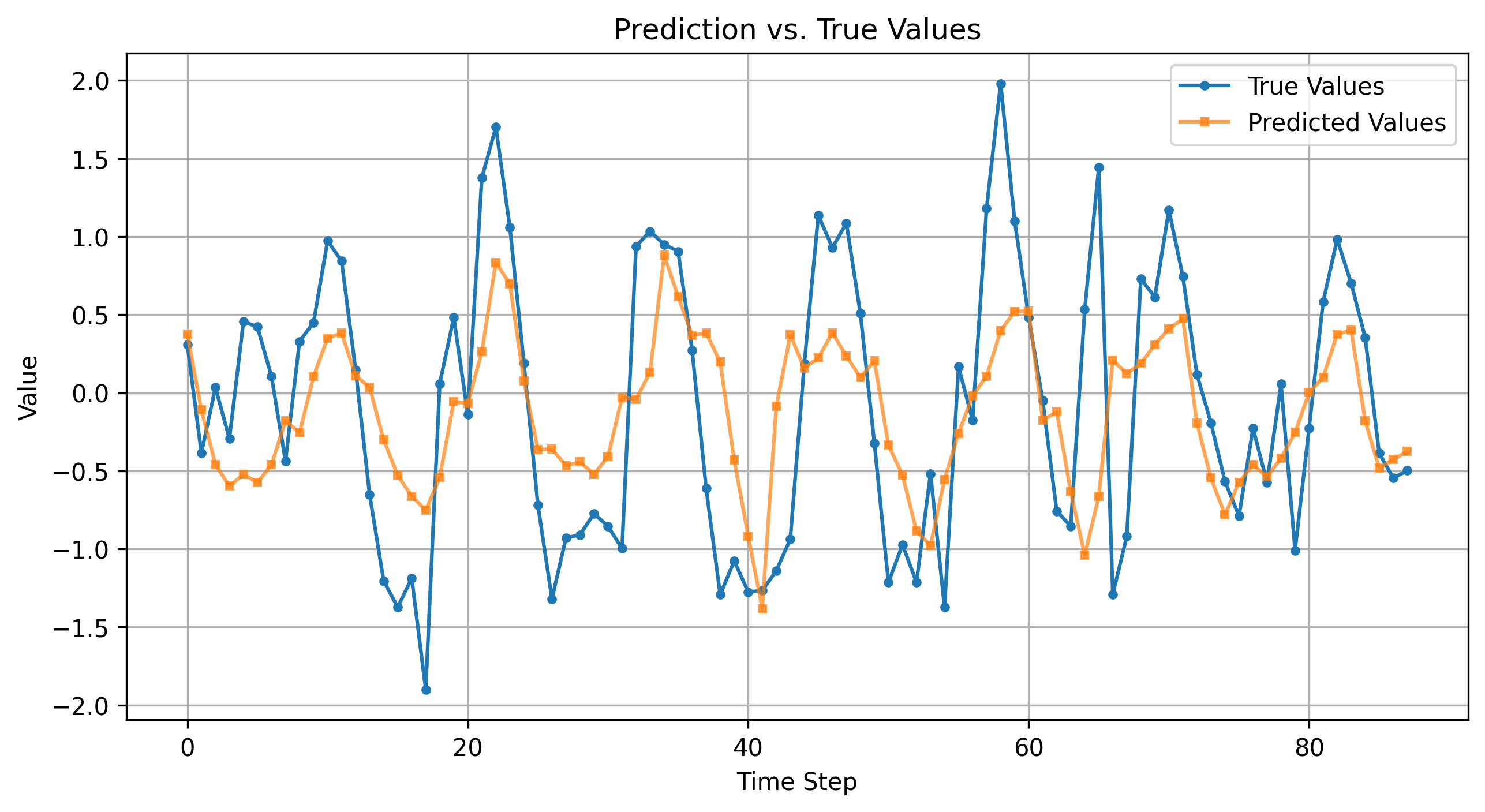}
    \caption{iTransformer}
    \label{fig:itransformer}
  \end{subfigure}%
  \begin{subfigure}[b]{0.25\textwidth}
    \includegraphics[width=\textwidth]{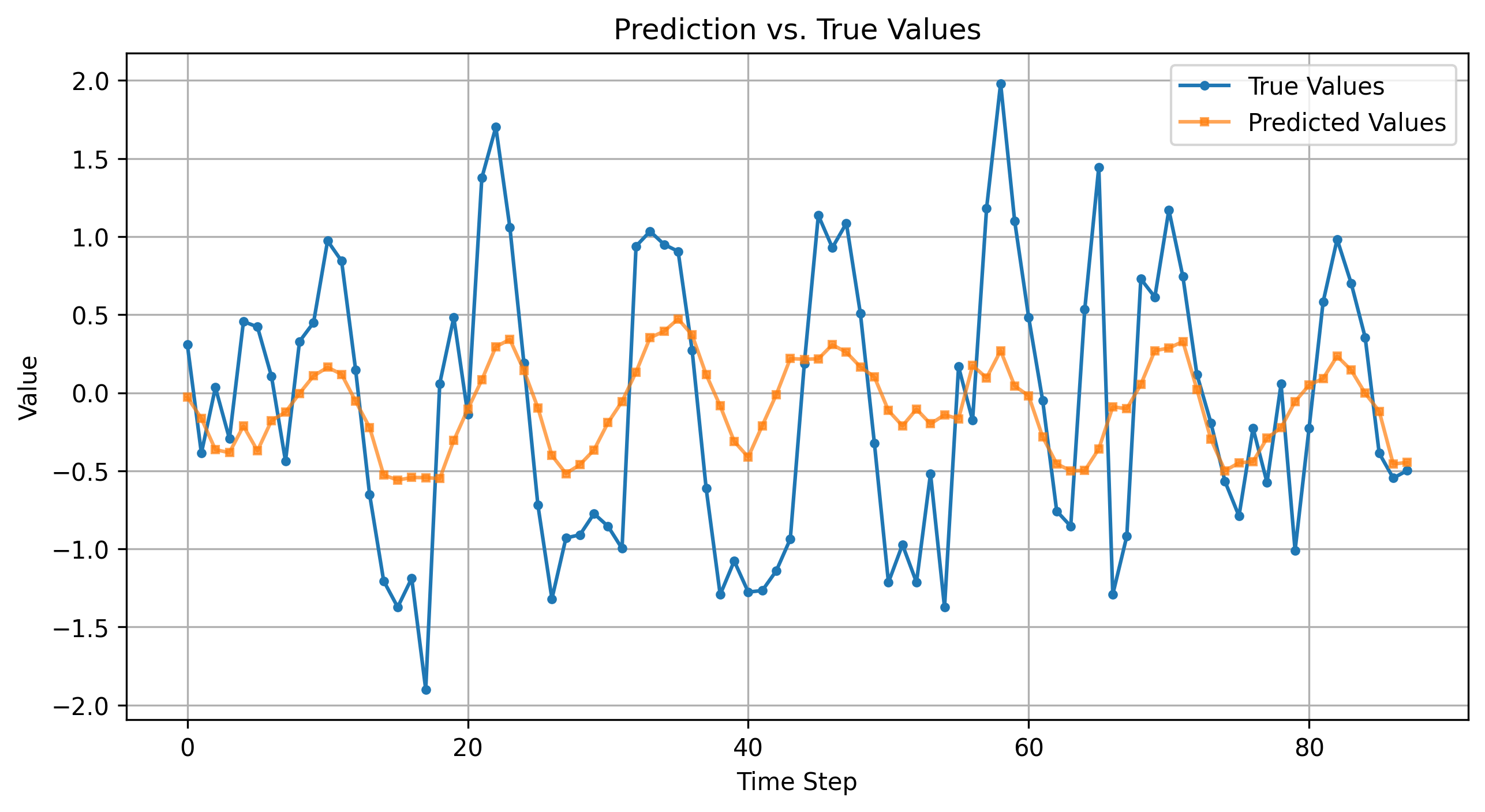}
    \caption{PatchTST}
    \label{fig:patchtst}
  \end{subfigure}%
  \caption{Predicted and observed SPEI time series for one month ahead forecasting at Location 1. Panels (a) and (e) show our inference time wrappers SMR\textsuperscript{2} and MBB, while panels (b to d, f to h) present representative baselines. The wrapper panels visually track the observed series more closely for this representative site, consistent with the aggregate gains reported in Table~\ref{tab:main_results}.}
  \label{fig:wrapper_vs_baselines}
\end{figure*}

\paragraph{Residual Block Construction}

Following standard MBB practice~\cite{kunsch1989jackknife,Lahiri1993}, we set the block length to $b = \lfloor n^{1/3} \rfloor$ where $n = |\mathcal{T}_{\text{val}}|$, a common asymptotic rule of thumb under standard dependence assumptions. We construct overlapping candidate blocks:
\begin{equation}
B_i = \{r_i, r_{i+1}, \ldots, r_{i+b-1}\}, \quad i \in \{1, \ldots, n-b+1\}
\end{equation}
This overlapping construction lets the finite validation residual series contribute many possible local error patterns.

\paragraph{Bootstrap Resampling Strategy}
To generate a perturbed sequence of length $L$, we sample $\lceil L/b \rceil$ blocks with replacement. The sampling process creates the ordered block sequence
\begin{equation}
\mathbf{B}^{(k)} = (B_1^{(k)}, B_2^{(k)}, \ldots, B_{\lceil L/b \rceil}^{(k)})
\end{equation}
where each $B_j^{(k)}$ is randomly selected from the available blocks. These blocks are then concatenated and truncated to form residual sequences of the required length:
\begin{equation}
r_{1:L}^{(k)} = [B_1^{(k)}, B_2^{(k)}, \ldots]_{1:L}
\end{equation}
where $[\,\cdot\,]$ denotes the same vector concatenation used earlier (e.g., in the multi-resolution input construction) and $[\,\cdot\,]_{1:L}$ truncates the concatenated blocks to length $L$. This process generates $K$ different residual sequences used to form the perturbed histories below.

\paragraph{Ensemble Prediction}
For each bootstrap sample $k$, we construct perturbed target histories by adding the position indexed resampled residual sequence to the original predictions over the history window:
\begin{equation}
\tilde{y}_{t-L+i}^{(k)} = \hat{y}_{t-L+i} + r_i^{(k)}, \quad i = 1, \ldots, L
\end{equation}
Stacking these over the window gives the perturbed target history $\tilde{y}_{t-L+1:t}^{(k)} = (\tilde{y}_{t-L+1}^{(k)}, \ldots, \tilde{y}_{t}^{(k)})$, that is, the foundation model's own predicted history corrected by the $k$th resampled residual sequence. Here $r_i^{(k)}$ is the $i$th element of $r_{1:L}^{(k)}$, and $\hat{y}_{t-L+i}$ are the frozen backbone's rolling one step predictions over the history window, obtained under the same one step protocol as inference. Pairing it with the covariates gives the $k$th perturbed input $\tilde{\mathbf{X}}_t^{(k)} = (\mathbf{c}_{t-L+1:t}, \tilde{y}_{t-L+1:t}^{(k)})$, which is fed to the frozen foundation model:
\begin{align}
\hat{y}_{t+1}^{(k)} &= f_{\theta}^{\text{base}}(\tilde{\mathbf{X}}_t^{(k)}) \\
\hat{y}_{t+1}^{\mathrm{MBB}} &= \frac{1}{K} \sum_{k=1}^{K} \hat{y}_{t+1}^{(k)}
\end{align}
The second line is the ensemble point forecast; the variance argument for this average is stated in the theoretical analysis below.

\subsection{Theoretical Analysis}
\label{sec:theoretical_analysis}

Because the wrappers are deliberately lightweight, the analysis below is to make explicit \emph{why} a black-box correction of a frozen TSFM should help: why the residual of a frozen backbone is learnable, why combining multiple temporal resolutions is sound, and why a block bootstrap ensemble stabilizes the point forecast. Everything rests on a single assumption and two short, assumption light results; the derivations are in Appendix~A.

\paragraph{Residual notation in the analysis.}
We use different residual symbols because the two wrappers use
residuals for different purposes. In SMR$^2$, $\epsilon_t^{(s)}$
denotes the reduced-context, stride-specific calibration residual
defined in Eq.~(5). It is used to fit the stride-specific residual
corrector and to produce the learned correction
$\hat{\epsilon}_{t+1}^{(s)}$ in Eq.~(7). In MBB, $r_t$ denotes the
full-context validation residual of the frozen backbone, used only
for constructing temporally coherent block-bootstrap perturbations.
Thus, $\epsilon_t^{(s)}$ is a supervised correction target, whereas
$r_t$ is a resampling object.

\begin{assumption}[Multi-resolution residual decomposition]
\label{ass:residual}
Let
\[
r_{t+1}
=
y_{t+1}
-
f_{\theta}^{\mathrm{base}}(\mathbf{X}_t)
\]
be the full-context one-step residual of the frozen backbone. We
assume that its predictable part can be decomposed into
stride-specific, regionally predictable components plus
unpredictable noise,
\begin{equation}
\label{eq:residual_decomp}
r_{t+1}
=
\sum_{s \in \mathcal{S}} r_{t+1}^{(s)}
+
\eta_{t+1},
\end{equation}
where $r_{t+1}^{(s)}$ is a latent theoretical component of the
full-context backbone residual that is visible at temporal stride
$s$, and $\eta_{t+1}$ is irreducible noise. The term
$r_{t+1}^{(s)}$ is not an observed residual sequence; SMR$^2$
estimates its predictable effect through the learned correction
$\hat{\epsilon}_{t+1}^{(s)}$.
\end{assumption}

This is consistent with what Figures~\ref{fig:enter_label} and~\ref{fig:smr2_motivation} display: every backbone in Figure~\ref{fig:enter_label} (SARIMA, iTransformer, TimesFM) leaves a \emph{systematic}, non random residual, and Figure~\ref{fig:smr2_motivation} shows the residual distribution changing shape across strides, so the multi-resolution components carry complementary information rather than copies of the same signal.

\paragraph{SMR\textsuperscript{2}: a convex risk bound.}
Recall the per stride corrected forecast $\tilde{y}_{t+1}^{(s)} = \hat{y}_{t+1}^{(s)} + \hat{\epsilon}_{t+1}^{(s)}$ and the ensemble $\tilde{y}_{t+1}^{\mathrm{ens}} = \sum_{s\in\mathcal{S}} w^{(s)}\, \tilde{y}_{t+1}^{(s)}$ from the SMR\textsuperscript{2} wrapper above, with non negative weights obeying $\sum_{s\in\mathcal{S}} w^{(s)} = 1$.

\begin{proposition}[Ensemble risk bound]
\label{prop:smr2}
Writing $e^{(s)} = \tilde{y}_{t+1}^{(s)} - y_{t+1}$ for the per stride error, $\mathbf{w}=(w^{(s)})_{s\in\mathcal{S}}$ for the ensemble weight vector, $\boldsymbol{\beta} = (\mathbb{E}[e^{(s)}])_{s\in\mathcal{S}}$ for the bias vector, $\boldsymbol{\Sigma} = \mathrm{Cov}\big((e^{(s)})_{s\in\mathcal{S}}\big)$ for the error covariance, and $\mathcal{L}_{\mathrm{ens}}$ for the ensemble mean squared error, we have
\begin{equation}
\label{eq:jensen}
\mathbb{E}\big[(\tilde{y}_{t+1}^{\mathrm{ens}} - y_{t+1})^2\big] \;\le\; \sum_{s\in\mathcal{S}} w^{(s)}\, \mathbb{E}\big[(e^{(s)})^2\big],
\end{equation}
and decomposes exactly as
\begin{equation}
\label{eq:biasvar}
\mathcal{L}_{\mathrm{ens}} = (\mathbf{w}^{\!\top}\boldsymbol{\beta})^2 + \mathbf{w}^{\!\top}\boldsymbol{\Sigma}\,\mathbf{w}.
\end{equation}
\end{proposition}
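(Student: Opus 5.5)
The plan starts from the simplex constraint. Because $\sum_{s} w^{(s)} = 1$ and $w^{(s)} \ge 0$, the ensemble error is a convex combination of the per stride errors:
\[
\tilde{y}_{t+1}^{\mathrm{ens}} - y_{t+1} = \sum_{s\in\mathcal{S}} w^{(s)} \tilde{y}_{t+1}^{(s)} - \Big(\sum_{s\in\mathcal{S}} w^{(s)}\Big) y_{t+1} = \sum_{s\in\mathcal{S}} w^{(s)} e^{(s)} = \mathbf{w}^{\!\top}\mathbf{e},
\]
where $\mathbf{e}=(e^{(s)})_{s\in\mathcal{S}}$. This identity uses only the normalisation guaranteed by Eq.~\eqref{eq:r2weights} in both branches. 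It is the step where the weights must sum to one; otherwise a leftover multiple of $y_{t+1}$ would remain.

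\textbf{Bound \eqref{eq:jensen}.} I will apply Jensen's inequality to the convex map $x\mapsto x^2$ with the probability weights $w^{(s)}$. This gives the pointwise inequality
\[
\Big(\sum_{s\in\mathcal{S}} w^{(s)} e^{(s)}\Big)^2 \le \sum_{s\in\mathcal{S}} w^{(s)} (e^{(s)})^2
\]
for every realization. Taking expectations and using linearity then yields \eqref{eq:jensen}, provided each $\mathbb{E}[(e^{(s)})^2]<\infty$. I will state this finite-second-moment condition explicitly, since it is also needed for $\boldsymbol{\Sigma}$ to exist.

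\textbf{Decomposition \eqref{eq:biasvar}.} I will apply the scalar identity $\mathbb{E}[Z^2]=(\mathbb{E}Z)^2+\mathrm{Var}(Z)$ to $Z=\mathbf{w}^{\!\top}\mathbf{e}$. By linearity, $\mathbb{E}Z=\mathbf{w}^{\!\top}\boldsymbol{\beta}$. By bilinearity of covariance, $\mathrm{Var}(\mathbf{w}^{\!\top}\mathbf{e})=\mathbf{w}^{\!\top}\boldsymbol{\Sigma}\mathbf{w}$. Together these give $\mathcal{L}_{\mathrm{ens}}=(\mathbf{w}^{\!\top}\boldsymbol{\beta})^2+\mathbf{w}^{\!\top}\boldsymbol{\Sigma}\mathbf{w}$.

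\textbf{Main subtlety: fixed weights.} The argument needs the weights to be fixed, or at least independent of the test-time errors. This holds because the weights are computed once on the validation split and then frozen, so I will treat $\mathbf{w}$ as deterministic when conditioning on the calibration and validation data. All expectations above are then taken conditionally on those data. With this conditioning in place, both the Jensen step and the covariance step go through exactly as written.

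\textbf{Remark on Assumption~\ref{ass:residual}.} The proof does not need the assumption. It would only be invoked in a remark: the gap between the two sides of \eqref{eq:jensen} equals the weighted dispersion $\sum_s w^{(s)}\mathbb{E}[(e^{(s)}-\mathbf{w}^{\!\top}\mathbf{e})^2]$. This gap is large exactly when the strides carry complementary errors, which is the situation the assumption describes.
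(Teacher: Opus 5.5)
Your proof is correct and follows the paper's argument essentially step for step. The simplex constraint turns the ensemble error into $\mathbf{w}^{\!\top}\mathbf{e}$; pointwise Jensen for $x\mapsto x^2$, followed by taking expectations, gives the bound; and $\mathbb{E}[Z^2]=(\mathbb{E}Z)^2+\mathrm{Var}(Z)$ with bilinearity of covariance gives the decomposition. You also make explicit two conditions the paper leaves implicit: finite second moments, and treating $\mathbf{w}$ as fixed given the calibration and validation data. The latter is what makes the right-hand side equal $\sum_s w^{(s)}\mathbb{E}[(e^{(s)})^2]$ and gives $\mathbb{E}[\mathbf{w}^{\!\top}\mathbf{e}]=\mathbf{w}^{\!\top}\boldsymbol{\beta}$, so these make the argument slightly more careful.
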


Inequality~\eqref{eq:jensen} (a direct consequence of Jensen's inequality, since the weights lie on the simplex) guarantees the ensemble risk is controlled by the weighted average of the per stride risks. The decomposition~\eqref{eq:biasvar} motivates allocating mass to low error strides while accounting for inter stride dependence: when the covariance structure is favourable, mixing weakly correlated reliable strides can reduce the variance term $\mathbf{w}^{\!\top}\boldsymbol{\Sigma}\,\mathbf{w}$ relative to a single chosen stride. The $R^2$/inverse MSE weights in Eq.~\eqref{eq:r2weights} serve as a lightweight validation proxy for this allocation by favouring low risk resolutions, while remaining covariance agnostic. Intuitively, coarser strides average out high frequency fluctuations and expose slower regional trends, so different resolutions correct different parts of the residual in Eq.~\eqref{eq:residual_decomp}.

\paragraph{MBB: variance stabilization.}
The MBB wrapper averages $K$ forecasts obtained from block bootstrapped residual perturbations of the target history. For any such ensemble,
\begin{equation}
\label{eq:mbbvar}
\mathrm{Var}\!\left(\tfrac{1}{K}\sum_{k=1}^{K}\hat{y}_{t+1}^{(k)}\right) = \frac{1}{K^2}\sum_{i,j}\mathrm{Cov}\big(\hat{y}_{t+1}^{(i)}, \hat{y}_{t+1}^{(j)}\big),
\end{equation}
which equals $\tfrac{1}{K}\mathrm{Var}(\hat{y}_{t+1})$ when the bootstrap forecasts are independent and, for approximately exchangeable forecasts of comparable variance, falls below the single forecast variance whenever their pairwise correlation is below one. Under standard stationarity and mixing conditions the Moving Block Bootstrap is a consistent resampling scheme for dependent series~\cite{kunsch1989jackknife,Lahiri1993,Politis1994}; preserving within block autocorrelation (block length $b=\lfloor n^{1/3}\rfloor$) keeps the perturbed histories realistic, so averaging over them stabilizes the point forecast without touching the backbone.

\section{Experiments}

\subsection{Study Area}
We focus our evaluation on South Australia, shown in Figure~\ref{fig:studyArea}, a region characterized by high drought vulnerability and complex climate variability driven by multiple interacting modes (ENSO, IOD). South Australia occupies a unique position in Australia's climate system, experiencing a temperate climate with dry summers in the south transitioning to a persistently dry hot desert climate in the north \citep{rashid2019characterization}. The region's climate exhibits high spatial and temporal variability, with documented warming trends since the 1970s and strong dependence on multiple large scale climate drivers. It provides high quality long term meteorological data and represents a critical water security area where improved drought forecasting has significant practical value, while recent intensifying drought conditions create challenging adaptation scenarios for globally pre trained foundation models. \textit{All datasets used in this study are publicly available and reproducible through standardized preprocessing pipelines, enabling transparent evaluation and potential deployment across other semi arid regions.}

\begin{figure}[h]
    \centering
    \includegraphics[width=0.9\linewidth]{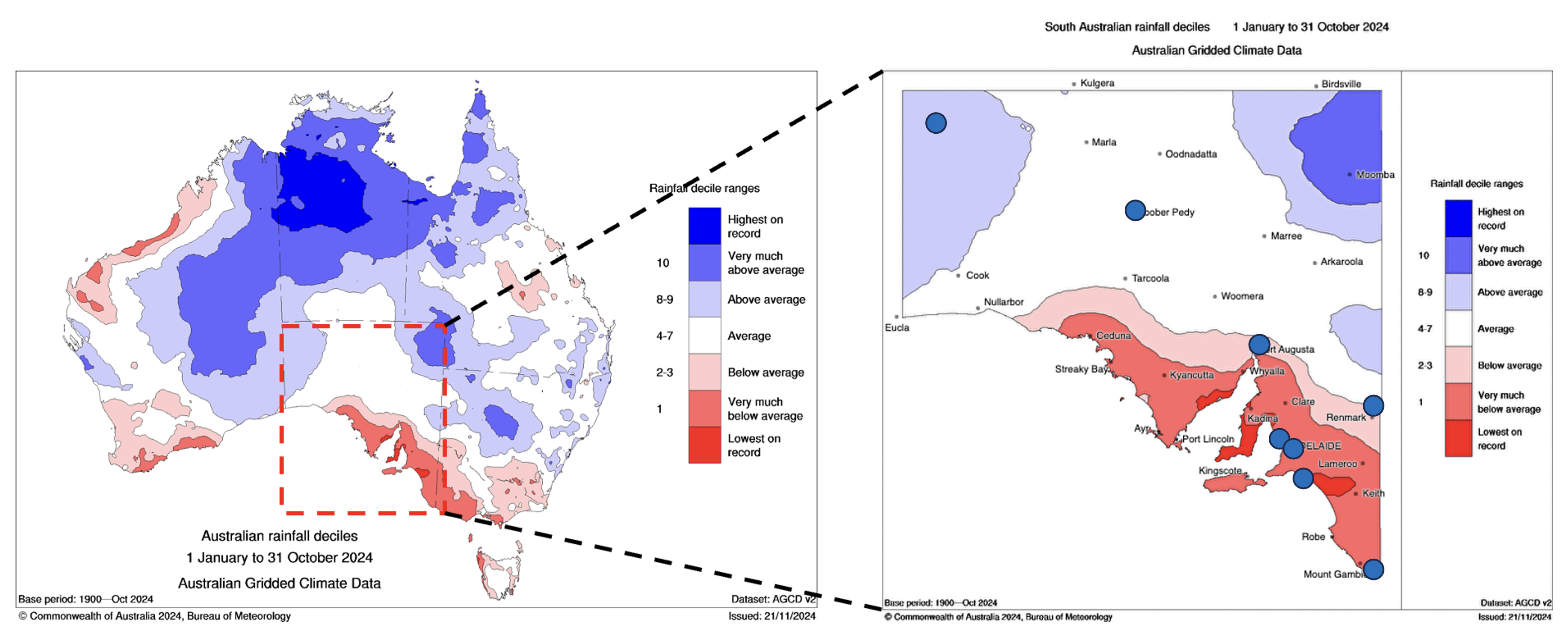}
    \caption{Study Area in South Australia}
    \label{fig:studyArea}
\end{figure}

\subsection{Experimental Setup}
\textbf{Dataset.} We evaluate on monthly SPEI-30 prediction across 3 South Australian locations over 1982--2018. The target is the SPEI-30 variable from the global daily SPEI-GD database \cite{liu2024speigd} (0.25$^\circ$ spatial resolution, natively spanning 1982--2021), converted to a monthly cadence and restricted to 1982--2018. The dataset comprises 39 variables: the target SPEI plus $d=38$ predictors, namely 34 meteorological variables from the NCEP--NCAR Reanalysis~1 \cite{kalnay1996ncep} and 4 climate indices (e.g., ENSO and IOD \cite{trenberth1997enso,saji1999dipole}). We use a chronological 70/10/20 train/validation/test split; each forecast uses a context window of length $L$ covering the available training history.

\textbf{Baselines.} We compare our wrappers against three categories of models: (i) classical statistical and recurrent baselines (ARIMA \cite{Box2015TimeSeriesAnalysis}, ETS, LSTM \cite{Hochreiter1997}, GRU, and TCN \cite{Bai2018}); (ii) Transformer based architectures including standard Transformer \cite{vaswani2017attention}, Autoformer \cite{Wu2021}, Crossformer \cite{Zhang2022}, DLinear \cite{Zeng2023}, FiLM \cite{Dongsheng2022}, iTransformer \cite{liu2024itransformerinvertedtransformerseffective}, PatchTST \cite{Nie2022}, TimesNet \cite{wu2023timesnet_iclr}, and TSMixer \cite{Chen2023}; and (iii) foundation models including TabPFN \cite{hoo2025tablestimetabpfnv2outperforms}, TimeGPT \cite{garza2023timegpt}, and TimesFM \cite{Sen2024}. All models are evaluated using identical data splits and preprocessing pipelines. 

\textbf{Implementation Details.} SMR\textsuperscript{2} uses strides $\mathcal{S} = \{1, 2, 4, 6, 8\}$ selected to capture multi-resolution temporal patterns from monthly to seasonal dynamics (see Appendix~B for details). The ridge residual corrector $f^{\text{res}}$ uses penalty $\lambda_{\text{ridge}} = 0.1$, residual lag length $q = 6$, and stabilizer $\varepsilon = 10^{-6}$. MBB sets block length $b = \lfloor n^{1/3} \rfloor$ and $K = 100$ bootstrap samples. Both wrappers add only inference time computation; runtime, memory, hardware, and software details are reported in Appendices~E--F.

\textbf{Evaluation Metrics.} We report Mean Squared Error (MSE), Mean Absolute Error (MAE), and coefficient of determination (R$^2$); the evaluation protocol is detailed in Appendix~D. 

\begin{table*}[h]
\centering
\renewcommand{\arraystretch}{0.85}
\scriptsize
\resizebox{\textwidth}{!}{
\begin{tabular}{l|ccc|ccc|ccc}
\toprule
\multirow{2}{*}{Method} & \multicolumn{3}{c|}{Location 1 (-26.125\_129.125)} & \multicolumn{3}{c|}{Location 2 (-29.125\_134.875)} & \multicolumn{3}{c}{Location 3 (-35.625\_138.875)} \\
& MSE$\downarrow$ & MAE$\downarrow$ & R$^2\uparrow$ & MSE$\downarrow$ & MAE$\downarrow$ & R$^2\uparrow$ & MSE$\downarrow$ & MAE$\downarrow$ & R$^2\uparrow$ \\
\midrule
\multicolumn{10}{c}{\textit{Classical and Recurrent Baselines}} \\
ARIMA & 1.213 & 0.882 & -0.417 & 1.201 & 0.875 & -0.409 & 1.225 & 0.887 & -0.426 \\
ETS & 0.950 & 0.765 & 0.050 & 0.942 & 0.760 & 0.055 & 0.957 & 0.769 & 0.048 \\
LSTM & 0.688 & 0.612 & 0.320 & 0.679 & 0.605 & 0.326 & 0.694 & 0.616 & 0.318 \\
GRU & 0.674 & 0.598 & 0.335 & 0.667 & 0.592 & 0.340 & 0.681 & 0.603 & 0.330 \\
TCN & 0.662 & 0.586 & 0.345 & 0.655 & 0.581 & 0.350 & 0.670 & 0.590 & 0.340 \\
\midrule
\multicolumn{10}{c}{\textit{Deep Learning Baselines}} \\
Transformer & 0.537 & 0.554 & 0.451 & 0.573 & 0.575 & 0.414 & 0.745 & 0.655 & 0.365 \\
Autoformer & 0.546 & 0.559 & 0.442 & 0.582 & 0.580 & 0.405 & 0.752 & 0.658 & 0.358 \\
Crossformer & 0.542 & 0.557 & 0.446 & 0.578 & 0.578 & 0.409 & 0.740 & 0.654 & 0.370 \\
DLinear & 0.533 & 0.551 & 0.455 & 0.569 & 0.572 & 0.418 & 0.735 & 0.650 & 0.375 \\
FiLM & 0.558 & 0.569 & 0.430 & 0.594 & 0.590 & 0.393 & 0.760 & 0.662 & 0.350 \\
iTransformer & 0.524 & 0.546 & 0.464 & 0.561 & 0.568 & 0.426 & 0.738 & 0.652 & 0.372 \\
PatchTST & 0.535 & 0.552 & 0.453 & 0.571 & 0.573 & 0.416 & 0.743 & 0.655 & 0.366 \\
TimesNet & 0.550 & 0.562 & 0.438 & 0.586 & 0.583 & 0.401 & 0.754 & 0.659 & 0.356 \\
TSMixer & 0.529 & 0.549 & 0.459 & 0.565 & 0.570 & 0.422 & 0.736 & 0.651 & 0.374 \\
\midrule
\multicolumn{10}{c}{\textit{Foundation Models}} \\
TabPFN-ts v1.0 & 0.462 & 0.503 & 0.521 & 0.389 & 0.445 & 0.632 & 0.566 & 0.562 & 0.442 \\
+ SMR\textsuperscript{2} (Ours) & 0.344 & 0.451 & 0.615 & 0.305 & 0.401 & 0.713 & 0.452 & 0.489 & 0.541 \\
+ MBB (Ours) & 0.351 & 0.459 & 0.609 & 0.312 & 0.407 & 0.705 & 0.459 & 0.493 & 0.534 \\
TimeGPT-1 & 0.437 & 0.489 & 0.536 & 0.351 & 0.431 & 0.658 & 0.543 & 0.551 & 0.456 \\
+ SMR\textsuperscript{2} (Ours) & 0.325 & \underline{0.437} & \underline{0.632} & 0.278 & 0.386 & \underline{0.738} & 0.431 & \underline{0.475} & 0.562 \\
+ MBB (Ours) & 0.331 & 0.442 & 0.625 & 0.284 & 0.392 & 0.731 & 0.438 & 0.483 & 0.554 \\
TimesFM v1.0 & 0.391 & 0.465 & 0.564 & 0.318 & 0.412 & 0.687 & 0.524 & 0.536 & 0.469 \\
+ SMR\textsuperscript{2} (Ours) & \underline{0.316} & 0.446 & 0.617 & \textbf{0.251} & \textbf{0.370} & \textbf{0.754} & \textbf{0.408} & \textbf{0.463} & \textbf{0.582} \\
+ MBB (Ours) & \textbf{0.313} & \textbf{0.324} & \textbf{0.681} & \underline{0.265} & \underline{0.378} & 0.735 & \underline{0.426} & 0.477 & \underline{0.564} \\
\bottomrule
\end{tabular}
}
\caption{One month ahead SPEI forecasting performance across three representative South Australian locations. Best results in \textbf{bold}, second best \underline{underlined}. Per method metrics are computed over their scored time steps (see Appendix~D).}
\label{tab:main_results}
\end{table*}

\subsection{Main Experiments}

\textbf{Forecasting Strategy.} We use a rolling one-step-ahead forecasting approach where predictions are made sequentially across the test period, with the context window sliding forward after each prediction to evaluate performance throughout the entire test set.

For baseline models, we train each architecture from scratch on the training set using their original best configurations. Traditional deep learning models (LSTM, GRU, TCN) are trained for 100 epochs with early stopping based on validation performance. Transformer based models use the same training protocol with architecture specific hyperparameters as reported in their original papers. Foundation models (TabPFN-ts V1.0, TimeGPT-1, and TimesFM v1.0) are used in their pre trained form without additional fine-tuning. This is not meant to understate their capacity: fine-tuning is precisely what the deployment setting we target rules out, since the backbone weights are often proprietary, the regional records are too short for stable fine-tuning, and operational compute budgets are limited. Evaluating the foundation models zero-shot therefore reflects the realistic regime in which our wrappers would be used, and provides a controlled comparison in which both the baselines and our method leave the backbone frozen; the wrappers add only inference time computation on top of the same frozen predictions. For each foundation model, we evaluate three configurations: baseline without modification, with SMR\textsuperscript{2} wrapper, and with MBB wrapper applied at inference time. We evaluate the two wrappers separately to isolate the effect of each; they are alternative lightweight adaptations rather than a combined pipeline.

We evaluate performance across three representative locations in South Australia that capture different climate zones. For each location, we compute all metrics separately and report location specific results in Table~\ref{tab:main_results}. Figure~\ref{fig:wrapper_vs_baselines} visualizes the predicted versus observed SPEI at Location 1 and provides representative visual evidence consistent with the quantitative improvements in Table~\ref{tab:main_results}.

Table~\ref{tab:main_results} shows that frozen foundation models already outperform most task trained baselines, while SMR\textsuperscript{2} and MBB further improve every frozen backbone across all three sites, indicating complementary regional corrections without parameter updates.

\begin{table}[h]
\centering
\setlength{\tabcolsep}{4pt}
\begin{tabular}{lccc}
\toprule
Configuration & MSE$\downarrow$ & R$^2\uparrow$ & $\Delta$MSE (\%) \\
\midrule
\multicolumn{4}{c}{\textit{Baseline}} \\
TimesFM (frozen)                & 0.391 & 0.564 & --   \\
\midrule
\multicolumn{4}{c}{\textit{SMR\textsuperscript{2} incremental switches}} \\
S-1 Single stride + Residual           & 0.383 & 0.573 & 2.0  \\
S-2 Multiple strides                  & 0.375 & 0.583 & 4.1  \\
S-3 + Adaptive $R^2$ weighting       & 0.368 & 0.592 & 5.9  \\
S-4 + Two phase calibration            & 0.355 & 0.603 & 9.2  \\
\textbf{S-Full}                        & \textbf{0.316} & \textbf{0.617} & \textbf{19.2} \\
\midrule
\multicolumn{4}{c}{\textit{MBB incremental switches}} \\
B-1 IID bootstrap (block = 1)          & 0.384 & 0.571 & 1.8  \\
B-2 Fixed block length                 & 0.379 & 0.578 & 3.1  \\
B-3 + Overlapping blocks               & 0.351 & 0.600 & 10.2 \\
\textbf{B-Full}                        & \textbf{0.313} & \textbf{0.681} & \textbf{19.9} \\
\bottomrule
\end{tabular}
\caption{Ablation study at Location 1 ($-26.125^\circ$, $129.125^\circ$).}
\label{tab:ablation_loc1}
\end{table}

\subsection{Ablation Study}

Table~\ref{tab:ablation_loc1} presents an ablation study on TimesFM v1.0 at Location 1, systematically evaluating each component's contribution. For SMR\textsuperscript{2}, incremental addition of components shows progressive improvements: single stride residual correction (S-1) reduces MSE by 2.0\%, S-2 uses multiple strides with uniform weights and achieves 4.1\% reduction, and the full configuration (S-Full), which combines adaptive $R^2$ weighting, two phase calibration, and the mean/std multi-resolution embedding, delivers 19.2\% MSE reduction. For MBB, the ablation isolates the effect of block sampling: IID bootstrap (B-1) provides minimal improvement (1.8\%), while the full MBB configuration (B-Full) with overlapping resampling and the standard block length $b=\lfloor n^{1/3}\rfloor$ achieves 19.9\% MSE reduction. Both wrappers show substantial and complementary improvements over the frozen baseline.

\subsection{Generalization to Other Regions}
To evaluate cross regional robustness, we further test our inference time wrappers with the TimesFM v1.0 backbone on three climatically distinct regions: Indonesia (tropical monsoon), North Africa (arid desert), and the U.S. West Coast (Mediterranean). These regions exhibit markedly different rainfall seasonality and drought dynamics, providing a stringent generalization test.

All regions follow the same experimental protocol as South Australia, including identical predictor target construction, preprocessing, rolling origin evaluation, and hyperparameter settings. Monthly reanalysis variables are used as predictors, with the SPEI-30 variable from the global daily SPEI-GD dataset (0.25°, converted to a monthly cadence over 1982--2018) as the target. The backbone TimesFM model remains fully frozen.

As summarized in Table \ref{tab:generalization}, SMR\textsuperscript{2} reduces MSE in every region, by roughly 16 to 41\%, while MBB also improves the frozen backbone under the same cross regional protocol. The largest relative gains appear on the U.S. West Coast and in North Africa, with a smaller but still clear improvement on the tropical monsoon Indonesia case. These results demonstrate that the proposed wrappers generalize effectively across heterogeneous climate regimes without fine-tuning or parameter updates.

\begin{table}[h]
\centering
\begin{tabular}{lccc}
\toprule
\textbf{Region} & \textbf{Method} & \textbf{MSE$\downarrow$} & \textbf{R$^2\uparrow$} \\
\midrule
\multirow{3}{*}{Indonesia} 
 & TimesFM v1.0 (frozen) & 0.260 & 0.485 \\
 & + SMR\textsuperscript{2} & 0.218 & 0.557 \\
 & + MBB & 0.229 & 0.535 \\
\midrule
\multirow{3}{*}{North Africa} 
 & TimesFM v1.0 (frozen) & 0.101 & 0.892 \\
 & + SMR\textsuperscript{2} & 0.0723 & 0.922 \\
 & + MBB & 0.0765 & 0.918 \\
\midrule
\multirow{3}{*}{U.S. West Coast} 
 & TimesFM v1.0 (frozen) & 0.155 & 0.854 \\
 & + SMR\textsuperscript{2} & 0.0921 & 0.912 \\
 & + MBB & 0.0958 & 0.907 \\
\bottomrule
\end{tabular}
\caption{
Cross regional generalization results on monthly SPEI prediction (1982--2018).
}
\label{tab:generalization}
\end{table}

\section{Social Impact}

This work supports the responsible use of pre trained time series foundation models for regional drought analysis under realistic data and deployment constraints.

Improved forecast reliability facilitates more informed interpretation of model outputs in downstream applications such as water resource management, agricultural planning, and disaster preparedness, helping practitioners assess risk under climate variability.The framework may also help extend scientific foundation models to regional settings where intrusive model modification is infeasible.

\section{Limitations, Ethical Considerations, and Conclusion}

This work evaluates monthly SPEI forecasting across multiple regions and demonstrates stable cross regional generalization within the evaluated scope. The study focuses on a limited set of temporal resolutions and climate settings, and performance under substantially different temporal granularities, climate regimes, or hydrological targets has not been systematically validated. In addition, the framework relies on sufficient historical observations for inference time calibration, and may be affected by data scarcity or by inherent biases of frozen foundation models under extreme regime shifts.

All data used are publicly available and do not involve human subjects or personal information. As drought forecasts may inform high stakes decisions such as water resource management and agricultural planning, predictions should be used as decision support in conjunction with local observations and domain expertise.

In summary, we proposed a lightweight inference time adaptation framework for applying frozen time series foundation models to regional drought forecasting. Without modifying model parameters or training objectives, the SMR\textsuperscript{2} and MBB wrappers effectively improve predictive accuracy in data scarce regional climate settings. Owing to its modular, backbone agnostic design, the framework is broadly applicable to climate science tasks that rely on frozen or black-box models.

\section*{GenAI Disclosure}
We used a large language model (LLM) solely as a writing assistant to translate and improve grammar, fluency, and consistency of the manuscript text. All ideas, methods, analyses, and conclusions are by the authors; all citations and numerical values were manually verified against primary sources. No non-public data were provided to the tool.

\section*{Acknowledgments}
This work was supported by the ARC Discovery Project DP230101122 and the Adelaide University Research Training Program (RTP) Scholarship. We gratefully acknowledge the continued support from the CSIRO Environment and Technology Research Units.

\bibliographystyle{ACM-Reference-Format}
\balance
\bibliography{references}


\appendix

\newpage
\section*{Appendix A: Residual Decomposition and Risk Bound}

This appendix states the assumption and the two short results used in the main text, and proves the only non trivial one (the SMR\textsuperscript{2} risk bound). The Moving Block Bootstrap properties are classical and are cited rather than reproved.

\medskip
\noindent\textbf{Assumption~\ref{ass:residual} (multi-resolution residual decomposition).}
\emph{The one step backbone residual $r_{t+1}=y_{t+1}-f_{\theta}^{\text{base}}(\mathbf{X}_t)$ decomposes as $r_{t+1}=\sum_{s\in\mathcal{S}} r_{t+1}^{(s)}+\eta_{t+1}$, with stride specific predictable components $r_{t+1}^{(s)}$ and irreducible noise $\eta_{t+1}$.}

\medskip
\noindent\textbf{Proposition~\ref{prop:smr2} (Ensemble risk bound).}
\emph{Let $e^{(s)}=\tilde{y}_{t+1}^{(s)}-y_{t+1}$ and let $\mathbf{w}=(w^{(s)})_{s\in\mathcal{S}}$ satisfy $w^{(s)}\ge0$, $\sum_s w^{(s)}=1$. Then}
\[
\begin{gathered}
\mathbb{E}\big[(\tilde{y}_{t+1}^{\mathrm{ens}}-y_{t+1})^2\big]\le \sum_{s\in\mathcal{S}} w^{(s)}\,\mathbb{E}\big[(e^{(s)})^2\big],\\[2pt]
\mathcal{L}_{\mathrm{ens}}=(\mathbf{w}^{\!\top}\boldsymbol{\beta})^2+\mathbf{w}^{\!\top}\boldsymbol{\Sigma}\,\mathbf{w}.
\end{gathered}
\]
\emph{where $\boldsymbol{\beta}=(\mathbb{E}[e^{(s)}])_{s}$ and $\boldsymbol{\Sigma}=\mathrm{Cov}\big((e^{(s)})_{s}\big)$.}

\noindent\textbf{Proof.}
Because the weights lie on the simplex, the ensemble error $\tilde{y}_{t+1}^{\mathrm{ens}}-y_{t+1}=\sum_{s} w^{(s)} e^{(s)}=\mathbf{w}^{\!\top}\mathbf{e}$ is a convex combination of the per stride errors $\mathbf{e}=(e^{(s)})_{s}$. Since $x\mapsto x^2$ is convex, Jensen's inequality gives $\big(\sum_{s} w^{(s)} e^{(s)}\big)^2\le \sum_{s} w^{(s)} (e^{(s)})^2$ pointwise; taking expectations yields the first inequality. For the identity,
\[
\mathcal{L}_{\mathrm{ens}}=\mathbb{E}\big[(\mathbf{w}^{\!\top}\mathbf{e})^2\big]=\big(\mathbb{E}[\mathbf{w}^{\!\top}\mathbf{e}]\big)^2+\mathrm{Var}(\mathbf{w}^{\!\top}\mathbf{e})=(\mathbf{w}^{\!\top}\boldsymbol{\beta})^2+\mathbf{w}^{\!\top}\boldsymbol{\Sigma}\,\mathbf{w}.
\]
When the covariance structure is favourable, for example when reliable low variance strides have sufficiently weak positive or negative correlation, suitable simplex weights can satisfy $\mathbf{w}^{\!\top}\boldsymbol{\Sigma}\,\mathbf{w}<\min_{s\in\mathcal{A}}\Sigma_{ss}$ for a subset $\mathcal{A}$ of reliable strides. In other cases, the Jensen bound above still guarantees that the ensemble risk is controlled by the weighted average of the per stride risks. \hfill$\square$

\medskip
\noindent\textbf{MBB variance stabilization (standard result).}
The MBB forecast is the average $\hat{y}_{t+1}^{\mathrm{MBB}}=\tfrac{1}{K}\sum_{k=1}^{K}\hat{y}_{t+1}^{(k)}$ of forecasts from block bootstrapped residual perturbations, with variance given by the elementary identity
\[
\mathrm{Var}\big(\hat{y}_{t+1}^{\mathrm{MBB}}\big)=\frac{1}{K^2}\sum_{i,j}\mathrm{Cov}\big(\hat{y}_{t+1}^{(i)},\hat{y}_{t+1}^{(j)}\big),
\]
which equals $\tfrac{1}{K}\mathrm{Var}(\hat{y}_{t+1})$ for independent replicates and, for approximately exchangeable replicates of comparable variance, stays below the single forecast variance whenever the pairwise correlation is below one. Consistency of the Moving Block Bootstrap for stationary, mixing sequences is a classical result of K{\"u}nsch~\cite{kunsch1989jackknife}, Lahiri~\cite{Lahiri1993}, and Politis and Romano~\cite{Politis1994}; block length calculations often yield order $b=O(n^{1/3})$ under specific objectives and dependence assumptions. We use the scheme as a black-box, input perturbation ensemble for a frozen TSFM rather than reproving its properties.

\section*{Appendix B: Algorithms and Implementation Details}

Algorithms~\ref{alg:smr2_residual_concise} and~\ref{alg:smr2_infer_concise} use the same stride view construction as Eqs.~\eqref{eq:seg_mean}--\eqref{eq:upsample}: for the current window length $m$, the construction pools only the first $s\lfloor m/s\rfloor$ positions into complete stride $s$ blocks over all $d{+}1$ channels, repeats each block statistic $\lfloor m/\lfloor m/s\rfloor\rfloor$ times, and forward fills or truncates to return a length $m$ view.

\textbf{Algorithm 1 SMR\textsuperscript{2} Residual Extraction (Concise)}
\begin{algorithm}[h]
\caption{SMR\textsuperscript{2} Residual Extraction Phase}
\label{alg:smr2_residual_concise}
\begin{algorithmic}[1]
\Require Training set $\{(\mathbf{c}_t,y_t)\}$, stride set $\mathcal{S}$, context length $L_{\text{train}}$
\Ensure Residual histories $\{\mathcal{R}^{(s)}\}_{s\in\mathcal{S}}$ and per stride ridge correctors $f^{\text{res}}$
\For{$s\in\mathcal{S}$}
  \For{$t\in\mathcal{T}_{\text{train}}$ with $t>L_{\text{train}}$}
    \State Construct $\mathbf{X}_{t-1,L_{\text{train}}}^{(s)}$ from $\mathbf{X}_{t-1,L_{\text{train}}}$ using Eqs.~\eqref{eq:seg_mean}--\eqref{eq:upsample} with stride $s$
    \State $\hat{y}_t^{(s)}\!\leftarrow\!f_{\theta}^{\text{base}}(\mathbf{X}_{t-1,L_{\text{train}}}^{(s)})$
    \State $\epsilon_t^{(s)}\!\leftarrow\!y_t-\hat{y}_t^{(s)}$
    \State Append $\epsilon_t^{(s)}$ to $\mathcal{R}^{(s)}$
  \EndFor
  \State Fit ridge corrector $f^{\text{res}}$ on $\mathcal{R}^{(s)}$ (once, then frozen)
\EndFor
\end{algorithmic}
\end{algorithm}

\textbf{Algorithm 2 SMR\textsuperscript{2} Inference}
\begin{algorithm}[h]
\caption{SMR\textsuperscript{2} Inference Phase}
\label{alg:smr2_infer_concise}
\begin{algorithmic}[1]
\Require Context $\mathbf{X}_{t} = (\mathbf{c}_{t-L+1:t}, y_{t-L+1:t})$, per-stride correctors $\{f^{\text{res}}\}$, recent residuals $\{\mathcal{R}^{(s)}\}$, validation metrics $\{R^2(s), \mathrm{MSE}(s)\}_{s\in\mathcal{S}}$
\Ensure Ensemble forecast $\tilde{y}_{t+1}^{\mathrm{ens}}$
\For{$s\in\mathcal{S}$}
  \State Construct $\mathbf{X}_t^{(s)}$ from $\mathbf{X}_t$ using Eqs.~\eqref{eq:seg_mean}--\eqref{eq:upsample} with stride $s$
  \State $\hat{y}_{t+1}^{(s)} \leftarrow f_{\theta}^{\text{base}}(\mathbf{X}_t^{(s)})$
  \State $\boldsymbol{\phi}_t^{(s)} \leftarrow [\,\epsilon_{t-q+1:t}^{(s)},\,\psi(\mathbf{X}_t^{(s)})\,]$
  \State $\hat{\epsilon}_{t+1}^{(s)} \leftarrow f^{\text{res}}(\boldsymbol{\phi}_t^{(s)})$
  \State $\tilde{y}_{t+1}^{(s)} \leftarrow \hat{y}_{t+1}^{(s)} + \hat{\epsilon}_{t+1}^{(s)}$
\EndFor
\State Compute nonnegative simplex weights $\{w^{(s)}\}_{s\in\mathcal{S}}$ according to Eq.~\eqref{eq:r2weights}
\State $\tilde{y}_{t+1}^{\mathrm{ens}} \leftarrow \sum_{s\in\mathcal{S}} w^{(s)} \tilde{y}_{t+1}^{(s)}$
\end{algorithmic}
\end{algorithm}

\textbf{Algorithm 3 MBB Wrapper}
\begin{algorithm}[h]
\caption{MBB Moving Block Bootstrap}
\label{alg:mbb_concise}
\begin{algorithmic}[1]
\Require Residuals $\{r_j\}_{j=1}^{n}$ from $\mathcal{T}_{\text{val}}$, block length $b$, samples $K$
\Ensure Ensemble point forecast $\hat{y}_{t+1}^{\mathrm{MBB}}$
\For{$i=1$ to $n-b+1$}
  \State Form block $B_i=\{r_i,\ldots,r_{i+b-1}\}$
\EndFor
\For{$k=1$ to $K$}
  \State Draw blocks with replacement to form sequence $r^{(k)}_{1:L}$
  \State Perturb history and predict $\hat{y}_{t+1}^{(k)}$
\EndFor
\State $\hat{y}_{t+1}^{\mathrm{MBB}}\!\leftarrow\!\frac{1}{K}\!\sum_k\!\hat{y}_{t+1}^{(k)}$
\end{algorithmic}
\end{algorithm}

\section*{Appendix C: Hyperparameter Configuration}

We summarize below the key hyperparameters used for SMR\textsuperscript{2} and MBB. SMR\textsuperscript{2} uses strides $\mathcal{S}=\{1,2,4,6,8\}$ aligned with dominant spectral components of monthly SPEI.
Residual correction applies ridge regression ($\lambda_{\text{ridge}}=0.1$, residual lag length $q=6$) on a feature vector of recent residuals and stride $s$ input view summary statistics, with normalization $\varepsilon=10^{-6}$, and ensemble weights based on validation $R^2(s)$.  
MBB adopts block length $b=\lfloor n^{1/3} \rfloor$ and $K=100$ bootstrap samples, averaged into the ensemble point forecast.
All models are frozen during inference, and hyperparameters are tuned only on validation data.

\section*{Appendix D: Dataset and Evaluation Protocol}

\textbf{Data sources.}
All datasets used in this study are publicly available. Meteorological predictors are obtained from the NCEP--NCAR Reanalysis~1 dataset. Drought targets use the SPEI-30 variable from the global daily SPEI-GD database (0.25$^\circ$, native span 1982--2021), converted to monthly cadence. The study region covers South Australia, with representative grid points at
$(-26.125^\circ,129.125^\circ)$,
$(-29.125^\circ,134.875^\circ)$, and
$(-35.625^\circ,138.875^\circ)$.
The SPEI product is accessible at
\url{https://doi.org/10.5281/zenodo.8060268}.
Documentation and downloads for NCEP--NCAR Reanalysis~1 are available from NOAA at
\url{https://www.psl.noaa.gov/data/gridded/data.ncep.reanalysis.html}.
Public climate indices (e.g., ENSO, IOD) are provided by operational centers such as NOAA CPC.

\textbf{Temporal coverage.}
Monthly data from 1982--2018, using the SPEI-30 variable from the daily SPEI-GD product, which natively spans 1982--2021.

\textbf{Preprocessing.}
Missing values are linearly interpolated and predictors standardized. The target is the SPEI-30 series from SPEI-GD, converted to the monthly cadence used for evaluation; no additional target smoothing is applied beyond this cadence conversion.

\textbf{Evaluation protocol.}
All models are evaluated with rolling one-step-ahead forecasting over the held out test period. For all Deep Learning methods implementation, the \href{https://github.com/thuml/Time-Series-Library}{THUML Time-Series-Library} \citep{thuml_tslib} provides consistent data loaders, training scripts, and reference configurations that cover most baselines. For our method implementation, we provide it in our code repo.

\section*{Appendix E: Computational Cost and Deployment Notes}

\textbf{Runtime and hardware.}
Experiments are run on a single workstation equipped with an NVIDIA GeForce RTX~3090 GPU with 24\,GB of memory, an Intel~12700K CPU, 64\,GB of DDR4 3200 memory, and a 2\,TB NVMe solid state drive. The operating system is Ubuntu~20.04.3 LTS.
Average inference latency is below 3\,s per site, with memory usage under 4\,GB.

\textbf{Energy and sustainability.}
Because SMR\textsuperscript{2} and MBB operate exclusively at inference time, they avoid the repeated forward and backward passes over many epochs that fine-tuning requires, and instead add only a small number of additional forward passes (the $K$ bootstrap evaluations for MBB and the $|\mathcal{S}|$ stride evaluations for SMR\textsuperscript{2}) to a single frozen model inference. Estimating the cost of fine-tuning as $2\times$ the forward cost per step (the backward pass) over the typical number of update steps used for the deep learning baselines shows that the wrappers avoid the dominant training time computation; the exact energy saving depends on the fine-tuning budget and hardware. We therefore report measured latency and memory, and treat energy reduction as a qualitative deployment benefit rather than a measured percentage.

\textbf{Deployment.}
Both wrappers are implemented as lightweight, plug-and-play modules for the frozen foundation model backbones evaluated in this paper (TabPFN, TimeGPT, and TimesFM), and can run efficiently on CPU only systems.

\section*{Appendix F: Reproducibility Checklist}

\textbf{Code and data availability.}
All scripts, processed data, and documentation are available at:

\url{https://github.com/Wentao-Gao/Lightweight_Wrappers}

\textbf{Software environment.}
The software stack uses Python~3.10, PyTorch~2.0.1 with CUDA~11.8, NumPy~1.24.3, Pandas~2.0.2, and Scikit learn~1.2.2. Random seeds are fixed for NumPy and PyTorch so that runs are deterministic where supported. The seed affects trainable baselines and MBB resampling; frozen foundation model backbones are deterministic at inference.

\textbf{Reproducibility scripts.}
The code repository includes configuration files and single command scripts that reproduce all scores and figures reported in the paper without retuning.

\end{document}